\documentclass{article}

% if you need to pass options to natbib, use, e.g.:
    \PassOptionsToPackage{numbers, compress}{natbib}
% before loading neurips_2022

% ready for submission
% \usepackage{neurips_2022}

\usepackage{amsthm}
\usepackage{amsmath,amssymb,amsfonts}
\usepackage{algorithmic}
\usepackage{graphicx}
\usepackage{textcomp}
\usepackage{xcolor}
\usepackage{algorithm}

% \usepackage{xr}
% %%%%%%%%%% For cross-referencing 
% \makeatletter
% \newcommand*{\addFileDependency}[1]{% argument=file name and extension
%   \typeout{(#1)}
%   \@addtofilelist{#1}
%   \IfFileExists{#1}{}{\typeout{No file #1.}}
% }
% \makeatother

% \newcommand*{\myexternaldocument}[1]{%
%     \externaldocument{#1}%
%     \addFileDependency{#1.tex}%
%     \addFileDependency{#1.aux}%
% }
% \myexternaldocument{neurips_2022}
% %%%%%%%%%%

\DeclareMathOperator*{\argmax}{arg\,max}

\usepackage{subfigure}
\usepackage{caption}
\usepackage{booktabs}
\usepackage{multirow}
\newcommand\Tstrut{\rule{0pt}{2.2ex}}

\usepackage{wrapfig} % for wrapping tables/figures
\usepackage{float}

% to compile a preprint version, e.g., for submission to arXiv, add add the
% [preprint] option:
%     \usepackage[preprint]{neurips_2022}

% to compile a camera-ready version, add the [final] option, e.g.:
    \usepackage[final]{neurips_2022}

% to avoid loading the natbib package, add option nonatbib:
%    \usepackage[nonatbib]{neurips_2022}

\usepackage[utf8]{inputenc} % allow utf-8 input
\usepackage[T1]{fontenc}    % use 8-bit T1 fonts
\usepackage{hyperref}       % hyperlinks
\usepackage{url}            % simple URL typesetting
\usepackage{booktabs}       % professional-quality tables
\usepackage{amsfonts}       % blackboard math symbols
\usepackage{nicefrac}       % compact symbols for 1/2, etc.
\usepackage{microtype}      % microtypography
\usepackage{xcolor}         % colors

\title{A Theoretical Study on Solving Continual Learning} 

% The \author macro works with any number of authors. There are two commands
% used to separate the names and addresses of multiple authors: \And and \AND.
%
% Using \And between authors leaves it to LaTeX to determine where to break the
% lines. Using \AND forces a line break at that point. So, if LaTeX puts 3 of 4
% authors names on the first line, and the last on the second line, try using
% \AND instead of \And before the third author name.

\author{%
Gyuhak Kim\thanks{Equal contribution}$^{\;\; 1}$, Changnan Xiao$^{* 2}$, Tatsuya Konishi\thanks{The work was done when this author was visiting Bing Liu's group at University of Illinois at Chicago}~$\;^{3}$, Zixuan Ke$^{1}$, Bing Liu\thanks{Correspondance author. Bing Liu <\texttt{liub@uic.edu}>}$^{\;\; 1}$ \\
$^{1}$ University of Illinois at Chicago \\
$^{2}$ ByteDance \\
$^{3}$ KDDI Research \\
}

\begin{document}
\maketitle

\theoremstyle{plain}
\newtheorem{theorem}{Theorem}
\newtheorem{proposition}{Proposition}
\newtheorem{lemma}{Lemma}
\newtheorem{corollary}{Corollary}
\theoremstyle{definition}
\newtheorem{definition}{Definition}
\newtheorem{assumption}{Assumption}
\theoremstyle{remark}
\newtheorem{remark}{Remark}

\begin{abstract}
Continual learning (CL) learns a sequence of tasks incrementally.~There are two popular CL settings, \textit{class incremental learning} (CIL) and \textit{task incremental learning} (TIL). A major challenge of CL is \textit{catastrophic forgetting} (CF). While several techniques are available to effectively overcome CF for TIL, CIL remains to be challenging 
due to the additional difficulty of \textit{inter-task class separation}.~So far little theoretical work has been done to provide a \textit{principled guidance} 
and \textit{necessary and sufficient} conditions
for solving the CIL problem.~This paper performs such a study.~It first
probabilistically decomposes the CIL problem
into two sub-problems:~\textit{within-task prediction} (WP) and \textit{task-id prediction} (TP). It further proves that TP is correlated with \textit{out-of-distribution} (OOD) detection. 
{\color{black}The key \textit{result} 
is that regardless of whether WP and TP or OOD detection are defined explicitly or implicitly by a CIL algorithm, good WP and good
TP or OOD detection are \textit{necessary} and \textit{sufficient} for good CIL performances. Additionally, TIL is simply WP.}
Based on the theoretical result, new CIL methods are also designed, which outperform strong baselines {\color{black}in both CIL and TIL} settings by a large margin.\footnote{{The code is available at \url{https://github.com/k-gyuhak/WPTP}}}
\end{abstract}

\section{Introduction}
\label{sec-intro}
Continual learning aims to incrementally learn a sequence of tasks~\citep{chen2018lifelong}. Each task consists of a set of classes to be learned together. A major challenge of CL is \textit{catastrophic forgetting} (CF). {\color{black}Although a large number of CL techniques have been proposed, they are mainly empirical. Limited theoretical research has done on how to solve CL. This paper performs such a theoretical study about the necessary and sufficient conditions for effective CL. There are two main CL settings that have been extensively studied:} \textit{class incremental learning} (CIL) and \textit{task incremental learning} (TIL)~\citep{van2019three}. In CIL, the learning process builds a single classifier for all tasks/classes 
learned so far. In testing, a test instance from any class may be presented for the model to classify. No prior task information (e.g., task-id) of the test instance is provided. Formally, CIL is defined as follows.  

\textbf{Class incremental learning} (CIL). CIL learns a sequence of tasks, $1, 2, ..., T$. Each task $k$ has a training  dataset
$\mathcal{D}_k=\{(x_k^i, y_k^i)_{i=1}^{n_k}\}$, where $n_k$ is the number of data samples in task $k$, and $x_k^i \in \mathbf{X}$ is an input sample and $y_k^i \in \mathbf{Y}_k$ (the set of all classes of task $k$) is its class label. All $\mathbf{Y}_k$'s are disjoint ($\mathbf{Y}_k \cap \mathbf{Y}_{k'} = \emptyset,\, \forall k \neq k'$) and $\bigcup_{k=1}^T \mathbf{Y}_k = \mathbf{Y}$. 
The goal of CIL is to construct a single predictive function or classifier $f : \mathbf{X} \rightarrow \mathbf{Y}$ that can identify the class label $y$ of each given test instance $x$.

In the TIL setup, each task is a separate classification problem. {\color{black}For example, one task could be to classify different breeds of dogs and another task could be to classify different types of animals (the tasks may not be disjoint).} One model is built for each task in a shared network. In testing, the task-id of each test instance is provided and the system uses only the specific model for the task (dog or animal classification) to classify the test instance. Formally, TIL is defined as follows.

\textbf{Task incremental learning} (TIL). TIL learns a sequence of tasks, $1, 2, ..., T$. Each task $k$ has a training dataset
$\mathcal{D}_k=\{((x_k^i, k), y_k^i)_{i=1}^{n_k}\}$, 
where $n_k$ is the number of data samples in task $k \in \mathbf{T} = \{1, 2, ..., T\}$, and $x_k^i \in \mathbf{X}$ is an input sample and $y_k^i \in \mathbf{Y}_k \subset \mathbf{Y}$ is its class label.
The goal of TIL is to construct a predictor $f: \mathbf{X} \times \mathbf{T} \rightarrow \mathbf{Y}$ to identify the class label $y \in \mathbf{Y}_k$ for $(x, k)$ (the given test instance $x$ from task $k$).

Several techniques are available to effectively overcome CF for TIL (with almost no CF)~\cite{Serra2018overcoming,supsup2020}. However, CIL remains to be highly challenging due to the additional problem of \textit{Inter-task Class Separation} (ICS) (establishing decision boundaries between the classes of the new task and the classes of the previous tasks) because the previous task data are not accessible.
Before discussing the proposed work, we recall 
the \textit{closed-world} assumption made by traditional machine learning, i.e., \textit{the classes seen in testing must have been seen in training}~\cite{chen2018lifelong,liu2021self}. However, in many applications, there are unknowns in testing, which is called the \textit{open world} setting~\cite{chen2018lifelong,liu2021self}. In open world learning, the training (or known) classes are called \textit{in-distribution} (IND) classes. A classifier built for the open world can (1) classify test instances of training/IND classes to their respective classes, which is called \textit{IND prediction}, and (2) detect test instances that do not belong to any of the IND/known classes but some unknown or \textit{out-of-distribution} (OOD) classes, which is called \textit{OOD detection}. Many OOD detection algorithms can perform both IND prediction and OOD detection~\cite{tack2020csi,liang2018enhancing,esmaeilpour2022zero,wang2022omg}. {\color{black}The commonality of OOD detection and CL is that they both need to consider future unknowns.}

This paper conducts a theoretical study of CIL, which is applicable to any CIL classification model. Instead of focusing on the traditional PAC generalization bound~\cite{pentina2014pac,karakida2022learning}, we focus on how to solve the CIL problem.
We first decompose the CIL problem into two sub-problems in a probabilistic framework: \textit{Within-task Prediction} (WP) and \textit{Task-id Prediction} (TP). WP means that the prediction for a test instance is only done within the classes of the task to which the test instance belongs, {\color{black}which is basically the TIL problem.} TP predicts the task-id. TP is needed because in CIL, task-id is not provided in testing. This paper then proves based on the popular cross-entropy loss that (1) the CIL performance is bounded by WP and TP performances, and (2) TP and task OOD detection performance bound each other (which connects CL and OOD detection). {\color{black}The key result is that regardless of whether WP and TP or OOD detection are defined explicitly or implicitly by a CIL algorithm, good WP and good TP or OOD detection are
\textit{necessary} and \textit{sufficient} conditions for good CIL performances. This result is applicable to both batch/offline and online CIL and to CIL problems with blurry task boundaries. {\color{black}The intuition is also quite simple because if a CIL model is perfect at detecting OOD samples for each task (which solves the ICS problem), then CIL is reduced to WP.} 
} 

This theoretical result provides a principled guidance for solving the CIL problem, i.e., to help design better CIL algorithms that can achieve strong WP and TP performances.
Since WP is basically IND prediction for each task and most OOD techniques perform both IND prediction and OOD detection, 
to achieve good CIL accuracy, a strong OOD performance for each task is necessary.

Based on the theoretical guidance, several new CIL methods are designed, including techniques based on the integration of a TIL method and an OOD detection method for CIL, which outperform strong baselines {\color{black}in both the CIL and TIL settings by a large margin}. This combination is particularly attractive because TIL has achieved no forgetting, and we only need a strong OOD technique that can perform both IND prediction and OOD detection to learn each task to achieve strong CIL results.

\section{Related Work}
\label{sec.related}

Although numerous CL techniques have been proposed, little study has been done to provide a theoretical guidance on how to solve the problem. Existing approaches mainly belong to several 
categories.  
Using regularization~\cite{kirkpatrick2017overcoming,Li2016LwF} to minimize changes to model parameters learned from previous tasks is a popular approach~\cite{Jung2016less,Camoriano2017incremental,zenke2017continual,ritter2018online,schwarz2018progress,xu2018reinforced,castro2018end,hu2019overcoming,Dhar2019CVPR,lee2019overcoming,ahn2019neurIPS,Liu2020,Zhu_2021_CVPR_pass}. 
Memorizing some old examples and using them to jointly train the % adjust the old models in learning a 
new task is another popular approach (called \textit{replay})~\cite{Rusu2016,Lopez2017gradient,Rebuffi2017,Chaudhry2019ICLR,hou2019learning,wu2019large,rolnick2019neurIPS, NEURIPS2020_b704ea2c_derpp,rajasegaran2020adaptive,Liu2020AANets,Cha_2021_ICCV_co2l,yan2021dynamically,wang2022memory,guo2022online}. 
Some systems learn to generate pseudo training data of old tasks and use them to jointly train the new task, called \textit{pseudo-replay}~\cite{Gepperth2016bio,Kamra2017deep,Shin2017continual,wu2018memory,Seff2017continual,Kemker2018fearnet,hu2019overcoming,Rostami2019ijcai,ostapenko2019learning}. Orthogonal projection learns each task in an orthogonal space to other tasks % and thus have minimum interference
\cite{zeng2019continuous,guo2022adaptive,chaudhry2020continual}. Our theoretical study is applicable to any continually trained classification models. % We try some representative methods from the categories to demonstrate our idea in our experiments.

{\textit{Parameter isolation} is yet another popular approach, which makes different subsets (which may overlap) of the network parameters dedicated to different tasks using masks~\cite{Serra2018overcoming,ke2020continual,Mallya2017packnet, supsup2020,NEURIPS2019_3b220b43compact}. 
This approach is particularly suited for TIL. Several methods have almost completely overcome forgetting. HAT~\cite{Serra2018overcoming} and CAT~\cite{ke2020continual} protect previous tasks by masking the important parameters to those tasks. PackNet~\cite{Mallya2017packnet}, CPG~\cite{NEURIPS2019_3b220b43compact} and SupSup~\cite{supsup2020} find an isolated sub-network for each task. 
HyperNet~\cite{von2019continual} initializes task-specific parameters conditioned on task-id. {ADP~\cite{Yoon2020Scalable} decomposes parameters into shared and adaptive parts to construct an order robust TIL system.} CCLL~\cite{singh2020calibrating} uses task-adaptive calibration in convolution layers.
Our methods designed based on the proposed theory make use of two parameter isolation-based TIL methods and two OOD detection methods. A strong OOD detection method CSI in~\cite{tack2020csi} helps produce very strong CIL results. CSI is based on data augmentation~\cite{he2020momentum} and contrastive learning~\cite{chen2020simple}. Excellent surveys of OOD detection include~\cite{bulusu2020anomalous,geng2020recent}.
} 

Some methods have used a TIL method for CIL with an additional task-id prediction technique. iTAML~\cite{rajasegaran2020itaml}
requires each test batch to be from a single task. This is not practical as test samples usually come one by one. CCG~\cite{abati2020conditional} builds a separate network to predict the task-id. Expert Gate~\cite{Aljundi2016expert} constructs a separate autoencoder for each task. {HyperNet~\cite{von2019continual} and PR-Ent~\cite{henning2021posterior} use entropy to predict the task id.}
Since none of these papers is a theoretical study, they did not know that strong OOD detection is the key. Our methods based on OOD detection perform dramatically better.

Several theoretical studies have been made on lifelong/continual learning. However, they focus on traditional generalization bound. \cite{pentina2014pac} proposes a PAC-Bayesian framework to provide a learning bound on expected error in future tasks by the average loss on the observed tasks. The work in \cite{lee2021continual} studies the generalization error by task similarity and \cite{karakida2022learning} studies the dependence of generalization error on sample size or number of tasks including forward and backward transfer.
\cite{bennani2020generalisation} shows that orthogonal gradient descent gives a tighter generalization bound than SGD.
Our work is very different as we focus on how to solve the CIL problem, which is orthogonal to the existing theoretical analysis.

\section{CIL by Within-Task Prediction and Task-ID Prediction}\label{sec.theorem}
{\color{black}This section presents our theory. It first shows that the CIL performance improves if the within-task prediction (WP) performance and/or the task-id prediction (TP) performance improve, and then shows that TP and OOD detection bound each other, which indicates that CIL performance is controlled by WP and OOD detection. This connects CL and OOD detection. Finally, we study the necessary conditions for a good CIL model, which includes a good WP, and a good TP (or OOD detection). 
}

\subsection{CIL Problem Decomposition} \label{sec:decomposition}
This sub-section first presents the assumptions made by CIL based on its definition and then proposes a decomposition of the CIL problem into two sub-problems. A CL system learns a sequence of tasks $\{(\mathbf{X}_k, \mathbf{Y}_k)\}_{k=1,\dots,T}$, where $\mathbf{X}_{k}$ is the domain of task $k$ and $\mathbf{Y}_k$ are classes of task $k$ as $\mathbf{Y}_k = \{\mathbf{Y}_{k, j}\}$, 
where $j$ indicates the $j$th class in task $k$. 
Let $\mathbf{X}_{k, j}$ to be the domain of $j$th class of task $k$, where $\mathbf{X}_{k} = \bigcup_j \mathbf{X}_{k, j}$. 
{\color{black}For accuracy, we will use $x \in \mathbf{X}_{k, j}$ instead of $\mathbf{Y}_{k, j}$ in probabilistic analysis.} 
Based on the definition of class incremental learning (CIL) (Sec.~\ref{sec-intro}), the following assumptions are implied,
\begin{assumption}
    The domains of classes of the same task are disjoint, i.e., $\mathbf{X}_{k, j} \cap \mathbf{X}_{k, j'} = \emptyset,\, \forall j \neq j'$.
\end{assumption}
\begin{assumption}
    The domains of tasks are disjoint, i.e., $\mathbf{X}_k \cap \mathbf{X}_{k'} = \emptyset,\, \forall k \neq k'$.
\end{assumption}
For any ground event $D$, the goal of a CIL problem is to learn $\mathbf{P}(x \in \mathbf{X}_{k, j} | D)$. This can be decomposed into two probabilities, \textit{within-task IND prediction} (WP) probability and \textit{task-id prediction} (TP) probability. WP probability is
$\mathbf{P} (x \in \mathbf{X}_{k, j} | x \in \mathbf{X}_{k}, D)$ and TP probability is $\mathbf{P}(x \in \mathbf{X}_k | D)$.
We can rewrite the CIL problem using WP and TP based on the two assumptions, 
\begin{align}
     \mathbf{P}(x \in \mathbf{X}_{k_0, j_0} | D)
     &= \sum_{k=1,\dots,n} \mathbf{P} (x \in \mathbf{X}_{k, j_0} | x \in \mathbf{X}_k, D) \mathbf{P}(x \in \mathbf{X}_k | D) \label{eq:prob_sum} \\
     &= \mathbf{P} (x \in \mathbf{X}_{k_0, j_0} | x \in \mathbf{X}_{k_0}, D) \mathbf{P}(x \in \mathbf{X}_{k_0} | D) \label{eq:cil_in_til_and_tp}
\end{align}
where $k_0$ means a particular task and $j_0$ a particular class in the task. 

{\color{black}Some remarks are in order about Eq.~\ref{eq:cil_in_til_and_tp} and our subsequent analysis to set the stage.} 

\begin{remark}\label{remark1}
Eq.~\ref{eq:cil_in_til_and_tp} shows that if we can improve either the WP or TP performance, or both, we can improve the CIL performance.   
\end{remark}
\begin{remark} \label{remark:alg}
{\color{black}It is important to note that our theory is not concerned with the learning algorithm or the training process, but we will propose some concrete learning algorithms based on the theoretical result in the experiment section. 
}  
\end{remark}
\begin{remark}\label{remark:disjoint}
{\color{black}Note that the CIL definition and the subsequent analysis are applicable to tasks with any number of classes (including only one class per task) and to online CIL where the training data for each task or class comes gradually in a data stream and may also cross task boundaries (blurry tasks~\cite{bang2021rainbow}) because our analysis is based on an already-built CIL model after training. Regarding blurry task boundaries, suppose dataset 1 has classes \{dog, cat, tiger\} and dataset 2 has classes \{dog, computer, car\}. We can define task 1 as \{dog, cat, tiger\} and task 2 as \{computer, car\}. The shared class \textit{dog} in dataset 2 is just additional training data of \textit{dog} appeared after task 1.} 
\end{remark}
\begin{remark}\label{remark:meaningofeq2}
{\color{black}Furthermore, CIL = WP * TP in Eq.~\ref{eq:cil_in_til_and_tp} means that when we have WP and TP (defined either explicitly or implicitly by implementation), we can find a corresponding CIL model defined by WP * TP. Similarly, when we have a CIL model, we can find the corresponding underlying WP and TP defined by their probabilistic definitions.}
\end{remark}
In the following sub-sections, we develop this further concretely to derive the sufficient and necessary conditions for solving the CIL problem in the context of cross-entropy loss as it is used in almost all supervised CL systems.

\subsection{CIL Improves as WP and/or TP Improve}\label{sec:cil_improve_by_til_and_tp}

{\color{black}As stated in Remark~\ref{remark:alg} above,} the study here is based on a \textit{trained CIL model} and not concerned with the algorithm used in training the model. We use cross-entropy as the performance measure of a trained model as it is the most popular loss function used in supervised CL. For experimental evaluation, we use \textit{accuracy} following CL papers. Denote the cross-entropy of two probability distributions $p$ and $q$ as 
\begin{align}
    H(p, q) \overset{def}{=} - \mathbb{E}_p [\log q] = - \sum_i p_i \log q_i.
\end{align}
For any $x \in \mathbf{X}$, let $y$ to be the CIL ground truth label of $x$, where $y_{k_0, j_0} = 1$ if $x \in \mathbf{X}_{k_0, j_0}$ otherwise $y_{k, j} = 0$, $\forall (k, j) \neq (k_0, j_0)$.
Let $\Tilde{y}$ be the WP ground truth label of $x$, where $\Tilde{y}_{k_0, j_0} = 1$ if $x \in \mathbf{X}_{k_0, j_0}$ otherwise $\Tilde{y}_{k_0, j} = 0$, $\forall j \neq j_0$.
Let $\Bar{y}$ be the TP ground truth label of $x$, where $\Bar{y}_{k_0} = 1$ if $x \in \mathbf{X}_{k_0}$ otherwise $\Bar{y}_k = 0$, $\forall k \neq k_0$.
Denote
\begin{align}
    H_{WP} (x) &= H(\Tilde{y}, \{\mathbf{P}(x \in \mathbf{X}_{k_0, j} | x \in \mathbf{X}_{k_0}, D)\}_{j}), \\
    H_{CIL} (x) &= H(y, \{\mathbf{P}(x \in \mathbf{X}_{k, j} | D)\}_{k, j}), \\
    H_{TP} (x) &= H(\Bar{y}, \{\mathbf{P}(x \in \mathbf{X}_k | D)\}_{k})
\end{align}
where $H_{WP}$, $H_{CIL}$ and $H_{TP}$ are the cross-entropy values of WP, CIL and TP, respectively.
We now present our first theorem. The theorem connects CIL to WP and TP, and suggests that by having a good WP or TP, the CIL performance improves as the upper bound for the CIL loss decreases.
\begin{theorem}
\label{thm:ce}
If $H_{TP}(x) \leq \delta$ and $H_{WP}(x) \leq \epsilon$, we have
$
H_{CIL} (x) \leq \epsilon + \delta.
$ 
\end{theorem}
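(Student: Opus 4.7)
The plan is to exploit the fact that all three ground-truth label vectors ($y$, $\tilde y$, $\bar y$) are one-hot, so each cross-entropy reduces to a single negative log-likelihood, and then use the multiplicative decomposition already established in Eq.~\ref{eq:cil_in_til_and_tp} to turn it into additivity via $\log$.

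First I would unpack the definitions. Because $\tilde y$ puts all its mass on the coordinate $(k_0,j_0)$, the sum defining $H_{WP}(x)$ collapses to $-\log \mathbf{P}(x \in \mathbf{X}_{k_0,j_0} \mid x \in \mathbf{X}_{k_0}, D)$. Similarly, $H_{TP}(x) = -\log \mathbf{P}(x \in \mathbf{X}_{k_0} \mid D)$ and $H_{CIL}(x) = -\log \mathbf{P}(x \in \mathbf{X}_{k_0,j_0} \mid D)$. So the hypotheses give lower bounds $\mathbf{P}(x \in \mathbf{X}_{k_0,j_0} \mid x \in \mathbf{X}_{k_0}, D) \geq e^{-\epsilon}$ and $\mathbf{P}(x \in \mathbf{X}_{k_0} \mid D) \geq e^{-\delta}$.

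Next I would apply Eq.~\ref{eq:cil_in_til_and_tp}, which factors the CIL probability as the product of the WP and TP probabilities at the ground-truth indices. Taking the negative logarithm turns the product into a sum, yielding $H_{CIL}(x) = H_{WP}(x) + H_{TP}(x)$ exactly, from which the bound $H_{CIL}(x) \leq \epsilon + \delta$ follows immediately from the two hypotheses.

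There is no real obstacle here; the theorem is essentially a one-line bookkeeping consequence of the probabilistic identity in Eq.~\ref{eq:cil_in_til_and_tp} combined with the observation that cross-entropy against a one-hot target is just negative log-likelihood. The only subtlety worth stating explicitly is that the decomposition holds pointwise for each fixed $x$ with its own fixed $(k_0,j_0)$, so no averaging, convexity, or Jensen-type step is needed. I would also remark that in fact equality holds, making the bound tight, which foreshadows the sufficiency direction in the necessary-and-sufficient discussion the paper promises next.
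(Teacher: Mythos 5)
Your proof is correct and follows essentially the same route as the paper's: both collapse each cross-entropy against a one-hot target to a single negative log-likelihood, apply the factorization in Eq.~\ref{eq:cil_in_til_and_tp}, and take logarithms to obtain the exact identity $H_{CIL}(x) = H_{WP}(x) + H_{TP}(x)$, from which the bound is immediate. Your added remark that equality holds (so the bound is tight) is a nice observation the paper's appendix proof implicitly contains but does not state.
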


The detailed proof is given in Appendix~\ref{prf:ce}. This theorem  holds regardless of whether WP and TP are trained together or separately.
When they are trained separately, if WP is fixed and we let  $\epsilon=H_{WP}(x)$,  $H_{CIL}(x) \leq H_{WP}(x) + \delta$, which means if TP is better, CIL is better. 
Similarly, if TP is fixed, we have $H_{CIL}(x) \leq \epsilon + H_{TP} (x)$.
When they are trained concurrently, there exists a functional relationship between $\epsilon$ and $\delta$ depending on implementation. 
But no matter what it is, when $\epsilon + \delta$ decreases, CIL gets better.

{\color{black}Theorem~\ref{thm:ce} holds for any $x \in \mathbf{X} = \bigcup_k \mathbf{X}_k$} that satisfies $H_{TP} (x) \leq \delta$ or $H_{WP} (x) \leq \epsilon$. To measure the overall performance under expectation, we present the following corollary.
\begin{corollary}\label{cor:expectation}
Let $U(\mathbf{X})$ represents the uniform distribution on $\mathbf{X}$. i) If $\mathbb{E}_{x \sim U(\mathbf{X})} [H_{TP}(x)] \leq \delta$, then $\mathbb{E}_{x\sim U(\mathbf{X})} [H_{CIL} (x)] \leq \mathbb{E}_{x \sim U(\mathbf{X})} [H_{WP} (x)] + \delta$. Similarly, ii) $\mathbb{E}_{x \sim U(\mathbf{X})} [H_{WP} (x)] \leq \epsilon$, then $\mathbb{E}_{x\sim U(\mathbf{X})} [H_{CIL} (x)] \leq \epsilon + \mathbb{E}_{x \sim U(\mathbf{X})} [H_{TP}(x)]$.
\end{corollary}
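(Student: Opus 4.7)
The plan is to obtain the corollary directly from Theorem~\ref{thm:ce} by first establishing a pointwise inequality and then integrating against $U(\mathbf{X})$, so the argument reduces to linearity and monotonicity of expectation.

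Concretely, I would first observe that Theorem~\ref{thm:ce} is actually a two-parameter statement that can be specialised pointwise. For any fixed $x \in \mathbf{X}$, set $\epsilon_x = H_{WP}(x)$ and $\delta_x = H_{TP}(x)$. The hypotheses of Theorem~\ref{thm:ce} are then satisfied with equality, so the theorem yields the pointwise inequality
\begin{equation}
H_{CIL}(x) \;\leq\; H_{WP}(x) + H_{TP}(x), \qquad \forall\, x \in \mathbf{X}.
\end{equation}
This is the key step and it is really just an application of Theorem~\ref{thm:ce} with the tightest choice of $\epsilon$ and $\delta$ at each $x$.

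Next I would take the expectation under $U(\mathbf{X})$ on both sides. By monotonicity and linearity of expectation,
\begin{equation}
\mathbb{E}_{x \sim U(\mathbf{X})}[H_{CIL}(x)] \;\leq\; \mathbb{E}_{x \sim U(\mathbf{X})}[H_{WP}(x)] + \mathbb{E}_{x \sim U(\mathbf{X})}[H_{TP}(x)].
\end{equation}
For part (i), substituting the hypothesis $\mathbb{E}_{x \sim U(\mathbf{X})}[H_{TP}(x)] \leq \delta$ on the second term gives the claimed bound. For part (ii), exactly the same computation is used, but now one substitutes the hypothesis $\mathbb{E}_{x \sim U(\mathbf{X})}[H_{WP}(x)] \leq \epsilon$ on the first term, giving the symmetric statement.

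There is no real obstacle here beyond bookkeeping: the whole content sits in Theorem~\ref{thm:ce}, and the corollary is an averaged version of it. The only technical points worth mentioning in the write-up are that (a) $H_{WP}(x)$, $H_{TP}(x)$, $H_{CIL}(x)$ are nonnegative and (implicitly) integrable so the expectations are well-defined, and (b) one must apply Theorem~\ref{thm:ce} with $x$-dependent $(\epsilon_x,\delta_x)$, which is legitimate because the theorem is stated pointwise in $x$. Once those are noted, the proof is a single line of monotonicity plus linearity of expectation.
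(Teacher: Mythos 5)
Your proposal is correct and follows essentially the same route as the paper: the paper's proof also reduces the corollary to the pointwise relation between $H_{CIL}$, $H_{WP}$, and $H_{TP}$ (in fact it cites the exact identity $H_{CIL}(x) = H_{WP}(x) + H_{TP}(x)$ from the proof of Theorem~\ref{thm:ce}, whereas you derive the inequality $H_{CIL}(x) \leq H_{WP}(x) + H_{TP}(x)$ by specialising the theorem with $x$-dependent $\epsilon_x, \delta_x$) and then takes expectations. The difference is immaterial for the conclusion.
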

The proof is given in Appendix~\ref{prf:expectation}. The corollary is a direct extension of Theorem~\ref{thm:ce} in expectation. The implication is that given TP performance, CIL is positively related to WP. The better the WP is, the better the CIL is as the upper bound of the CIL loss decreases. Similarly, given WP performance, a better TP performance results in a better CIL performance. Due to the positive relation, we can improve CIL by improving either WP or TP using their respective methods developed in each area.

\subsection{Task Prediction (TP) to OOD Detection}

Building on Eq.~\ref{eq:cil_in_til_and_tp}, we have studied the relationship of CIL, WP and TP in Theorem \ref{thm:ce}. We now connect TP and OOD detection. They are shown to be dominated by each other to a constant factor.

We again use cross-entropy $H$ to measure the performance of TP and OOD detection of a trained network as in Sec.~\ref{sec:cil_improve_by_til_and_tp}
To build the connection between $H_{TP}(x)$ and OOD detection of each task, we first define the notations of OOD detection. We use $\mathbf{P}'_k (x \in \mathbf{X}_k | D)$ to represent the probability distribution predicted by the $k$th task's OOD detector. 
Notice that the task prediction (TP) probability distribution $\mathbf{P} (x \in \mathbf{X}_k | D)$ is a categorical distribution over $T$ tasks, while the OOD detection probability distribution $\mathbf{P}'_k (x \in \mathbf{X}_k | D)$ is a Bernoulli distribution. For any $x \in \mathbf{X}$, define 
\begin{align}
    H_{OOD, k} (x) = \left\{ 
    \begin{aligned}
    H(1, \mathbf{P}'_k (x \in \mathbf{X}_k | D)) = - \log \mathbf{P}'_k (x \in \mathbf{X}_k | D),\ x \in \mathbf{X}_k, \\
    H(0, \mathbf{P}'_k (x \in \mathbf{X}_k | D)) = - \log \mathbf{P}'_k (x \notin \mathbf{X}_k | D),\ x \notin \mathbf{X}_k.
    \end{aligned}
    \right.
\end{align}
In CIL, the OOD detection probability for a task can be defined using the output values corresponding to the classes of the task. Some examples of the function is a sigmoid of maximum logit value or a maximum softmax probability after re-scaling to 0 to 1. 
{It is also possible to define the OOD detector directly as a function of tasks instead of a function of the output values of all classes of tasks, i.e. Mahalanobis distance.} The following theorem shows that TP and OOD detection bound each other.
\begin{theorem}
\label{thm:tp_ood}
i) If $H_{TP} (x) \leq \delta$, let $\mathbf{P}'_k (x \in \mathbf{X}_k | D) = \mathbf{P} (x \in \mathbf{X}_k | D)$, then $H_{OOD, k} (x) \leq \delta, \forall\, k = 1, \dots, T$. ii) If $H_{OOD, k} (x) \leq \delta_k, k=1,\dots,T$, let $\mathbf{P} (x \in \mathbf{X}_k | D) = \frac{\mathbf{P}_k' (x \in \mathbf{X}_k |D)}{\sum_k \mathbf{P}_k' (x \in \mathbf{X}_k |D)}$, then $H_{TP} (x) \leq (\sum_k \mathbf{1}_{x \in \mathbf{X}_k} e^{\delta_{k}}) (\sum_k 1 - e^{-\delta_k})$, where $\mathbf{1}_{x \in \mathbf{X}_k}$ is an indicator function.
\end{theorem}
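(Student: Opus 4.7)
The plan is to reduce both directions to elementary manipulations of the cross-entropy expressions, using the disjointness of task domains (Assumption~2) so that the indicator $\mathbf{1}_{x \in \mathbf{X}_k}$ picks out a single task $k_0$. Throughout, let $k_0$ be the unique task with $x \in \mathbf{X}_{k_0}$ and write $p_k = \mathbf{P}(x \in \mathbf{X}_k \mid D)$, $p'_k = \mathbf{P}'_k(x \in \mathbf{X}_k \mid D)$ for brevity. The one-hot nature of $\bar{y}$ collapses $H_{TP}(x)$ to $-\log p_{k_0}$, and the OOD cross-entropy splits into $-\log p'_{k_0}$ (for $k=k_0$) and $-\log(1-p'_k)$ (for $k\neq k_0$).

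For part (i), with the identification $p'_k = p_k$, the bound $H_{TP}(x) \leq \delta$ is literally $-\log p_{k_0} \leq \delta$, which is exactly $H_{OOD,k_0}(x) \leq \delta$. For $k \neq k_0$, I would use that $p_{k_0} \geq e^{-\delta}$ together with $\sum_k p_k = 1$ (the TP distribution is categorical over tasks) to obtain $p_k \leq 1 - p_{k_0} \leq 1 - e^{-\delta}$, hence $H_{OOD,k}(x) = -\log(1-p_k) \leq \delta$. So part (i) is really just a rewriting, relying only on normalization.

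For part (ii), I would substitute the proposed definition $p_{k_0} = p'_{k_0}/\sum_k p'_k$ into $H_{TP}(x) = -\log p_{k_0}$ and algebraically rewrite it as
\[
H_{TP}(x) \;=\; \log\!\Bigl(1 + \tfrac{\sum_{k\neq k_0} p'_k}{p'_{k_0}}\Bigr).
\]
The OOD bounds then give $p'_{k_0} \geq e^{-\delta_{k_0}}$ and $p'_k \leq 1 - e^{-\delta_k}$ for $k\neq k_0$, so the ratio inside the log is at most $e^{\delta_{k_0}}\sum_{k\neq k_0}(1-e^{-\delta_k})$. Finally, I would apply the elementary inequality $\log(1+a) \leq a$ for $a\geq 0$ and note that by Assumption~2 the prefactor $\sum_k \mathbf{1}_{x\in \mathbf{X}_k} e^{\delta_k}$ equals $e^{\delta_{k_0}}$, yielding the stated bound (after trivially including the $k=k_0$ term in the sum $\sum_k(1-e^{-\delta_k})$, which only weakens the inequality).

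I do not expect a serious obstacle; the only step that requires a bit of care is matching the final expression to the form stated in the theorem. In particular, one must notice that the indicator sum in the stated bound is just a compact way to write $e^{\delta_{k_0}}$ (using disjointness of $\mathbf{X}_k$), and that the looseness introduced by $\log(1+a)\leq a$ together with extending the sum from $k\neq k_0$ to all $k$ is what makes the bound a clean product. Everything else is direct substitution and use of $-\log$ being monotone decreasing.
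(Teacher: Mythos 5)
Your proposal is correct and follows essentially the same route as the paper's proof: part (i) via the normalization of the TP categorical distribution, and part (ii) via the bounds $p'_{k_0}\geq e^{-\delta_{k_0}}$, $p'_k\leq 1-e^{-\delta_k}$ followed by $\log(1+a)\leq a$ and the observation that the indicator sum is just $e^{\delta_{k_0}}$. The only cosmetic difference is that you keep the sum over $k\neq k_0$ until the last step while the paper absorbs the $k_0$ term into the denominator earlier; both yield the identical stated bound.
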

See Appendix~\ref{prf:tp_ood} for the proof. 
As we use cross-entropy, the lower the bound, the better the performance is. The first statement (i) says that the OOD detection performance improves if the TP performance gets better (i.e., lower $\delta$). Similarly, the second statement (ii) says that the TP performance improves if the OOD detection performance on each task improves (i.e., lower $\delta_k$). Besides, since $(\sum_k \mathbf{1}_{x \in \mathbf{X}_k} e^{\delta_{k}}) (\sum_k 1 - e^{-\delta_k})$ converges to $0$ as $\delta_k$'s converge to $0$ in order of $O(|\sum_k \delta_k|)$, we further know that $H_{TP}$ and $\sum_k H_{OOD,k}$ are equivalent in quantity up to a constant factor.

In Theorem \ref{thm:ce}, we studied how CIL is related to WP and TP. In Theorem \ref{thm:tp_ood}, we showed that TP and OOD bound each other. Now we explicitly give the upper bound of CIL in relation to WP and OOD detection of each task. The detailed proof can be found in Appendix~\ref{prf:cil_with_op_and_ood}. 
\begin{theorem}
\label{thm:cil_with_op_and_ood}
If $H_{OOD, k}(x) \leq \delta_k,\, k = 1, \dots, T$ and $H_{WP}(x) \leq \epsilon$, we have
$$
H_{CIL} (x) \leq \epsilon + (\sum_k \mathbf{1}_{x \in \mathbf{X}_k} e^{\delta_{k}}) (\sum_k 1 - e^{-\delta_k}),$$ 
where $\mathbf{1}_{x \in \mathbf{X}_k}$ is an indicator function.
\end{theorem}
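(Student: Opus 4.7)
The plan is to obtain Theorem 3 as a direct composition of the two preceding theorems, with no new analytical work required. The key observation is that the right-hand side of the target inequality is exactly $\epsilon$ plus the bound on $H_{TP}(x)$ produced by part (ii) of Theorem~\ref{thm:tp_ood}. So the strategy is: first convert the per-task OOD bounds into a TP bound via Theorem~\ref{thm:tp_ood}(ii), then feed that TP bound, together with the given WP bound, into Theorem~\ref{thm:ce}.

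Concretely, I would proceed as follows. First, assuming $H_{OOD, k}(x) \leq \delta_k$ for every $k = 1, \dots, T$, invoke Theorem~\ref{thm:tp_ood}(ii) with the induced task-id distribution
\[
\mathbf{P}(x \in \mathbf{X}_k \mid D) \;=\; \frac{\mathbf{P}'_k(x \in \mathbf{X}_k \mid D)}{\sum_{k'} \mathbf{P}'_{k'}(x \in \mathbf{X}_{k'} \mid D)}.
\]
This yields
\[
H_{TP}(x) \;\leq\; \Bigl(\sum_k \mathbf{1}_{x \in \mathbf{X}_k} e^{\delta_k}\Bigr)\Bigl(\sum_k 1 - e^{-\delta_k}\Bigr) \;=:\; \delta^{\star}.
\]
Second, combining this with the hypothesis $H_{WP}(x) \leq \epsilon$, apply Theorem~\ref{thm:ce} with $\delta = \delta^{\star}$ to conclude $H_{CIL}(x) \leq \epsilon + \delta^{\star}$, which is exactly the stated bound.

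The only subtlety I would flag, and the step where care is needed, is making sure the probability distributions line up coherently across the two theorems. Theorem~\ref{thm:ce} is stated for a trained CIL model whose CIL probability $\mathbf{P}(x \in \mathbf{X}_{k,j} \mid D)$ decomposes as $\mathbf{P}(x \in \mathbf{X}_{k,j} \mid x \in \mathbf{X}_k, D)\,\mathbf{P}(x \in \mathbf{X}_k \mid D)$ as in Eq.~\ref{eq:cil_in_til_and_tp}; so I would make explicit that, once we plug in the normalized OOD-based TP probabilities above while keeping the given WP probabilities unchanged, the resulting CIL probabilities $\mathbf{P}(x \in \mathbf{X}_{k,j} \mid D)$ are well defined (they sum to one since the TP probabilities do, and the WP probabilities do within each task), and therefore satisfy the hypotheses of Theorem~\ref{thm:ce}. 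This is the main conceptual check; beyond it, the proof is a one-line composition and contains no genuinely hard step.
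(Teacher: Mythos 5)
Your proposal is correct and matches the paper's own proof, which likewise obtains the result by composing Theorem~\ref{thm:tp_ood}(ii) (to bound $H_{TP}(x)$ by $(\sum_k \mathbf{1}_{x \in \mathbf{X}_k} e^{\delta_{k}})(\sum_k 1 - e^{-\delta_k})$) with the decomposition $H_{CIL}(x) = H_{WP}(x) + H_{TP}(x)$ from Theorem~\ref{thm:ce}. The consistency check you flag is a reasonable extra precaution but does not change the argument.
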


\subsection{Necessary Conditions for Improving CIL}

In Theorem \ref{thm:ce}, we showed that good performances of WP and TP are sufficient to guarantee a good performance of CIL.
In Theorem \ref{thm:cil_with_op_and_ood}, we showed that good performances of WP and OOD are sufficient to guarantee a good performance of CIL. 
For completeness, we study the necessary conditions of a well-performed CIL in this sub-section.

\begin{theorem}
\label{thm:necessary_condition}
If $H_{CIL} (x) \leq \eta$, then there exist
i) a WP, s.t. $H_{WP} (x) \leq \eta$, 
ii) a TP, s.t. $H_{TP} (x) \leq \eta$, and
iii) an OOD detector for each task, s.t. $H_{OOD, k} \leq \eta,\, k = 1, \dots, T$.
\end{theorem}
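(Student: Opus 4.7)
The plan is to observe that since the CIL ground truth $y$ is one-hot with its single $1$ at position $(k_0,j_0)$, the sum defining $H_{CIL}(x)$ collapses to a single term:
\begin{equation*}
H_{CIL}(x) = -\log \mathbf{P}(x\in \mathbf{X}_{k_0,j_0}\mid D),
\end{equation*}
so the hypothesis $H_{CIL}(x)\leq \eta$ is equivalent to the clean lower bound $\mathbf{P}(x\in \mathbf{X}_{k_0,j_0}\mid D)\geq e^{-\eta}$. Similarly, $\tilde y$ and $\bar y$ are one-hot, so $H_{WP}(x)$ and $H_{TP}(x)$ also reduce to single negative-log terms, and it suffices to produce the required probability lower bounds.

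First I would invoke the decomposition in Eq.~\ref{eq:cil_in_til_and_tp}, namely
\begin{equation*}
\mathbf{P}(x\in \mathbf{X}_{k_0,j_0}\mid D)=\mathbf{P}(x\in \mathbf{X}_{k_0,j_0}\mid x\in \mathbf{X}_{k_0},D)\cdot \mathbf{P}(x\in \mathbf{X}_{k_0}\mid D).
\end{equation*}
Since each factor lies in $[0,1]$, each must individually be at least $e^{-\eta}$. Taking $-\log$ of the first factor gives $H_{WP}(x)\leq \eta$, proving (i) with the WP induced by the CIL model's conditional distribution. Taking $-\log$ of the second factor gives $H_{TP}(x)\leq \eta$, proving (ii) with the TP induced by the CIL model's task marginals $\mathbf{P}(x\in\mathbf{X}_k\mid D)=\sum_j \mathbf{P}(x\in\mathbf{X}_{k,j}\mid D)$.

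For (iii) I would construct the per-task OOD detector by setting $\mathbf{P}'_k(x\in \mathbf{X}_k\mid D)=\mathbf{P}(x\in \mathbf{X}_k\mid D)$ for every $k$. For $k=k_0$ (the in-distribution case) this directly yields $H_{OOD,k_0}(x)=-\log \mathbf{P}(x\in \mathbf{X}_{k_0}\mid D)\leq \eta$ by part (ii). For $k\neq k_0$ (the out-of-distribution case) I need $-\log(1-\mathbf{P}(x\in\mathbf{X}_k\mid D))\leq \eta$, equivalently $\mathbf{P}(x\in \mathbf{X}_k\mid D)\leq 1-e^{-\eta}$. This follows from the fact that the task marginals sum to $1$ together with the bound $\mathbf{P}(x\in\mathbf{X}_{k_0}\mid D)\geq e^{-\eta}$, since then $\mathbf{P}(x\in\mathbf{X}_k\mid D)\leq 1-\mathbf{P}(x\in\mathbf{X}_{k_0}\mid D)\leq 1-e^{-\eta}$.

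There is really no deep obstacle here: the theorem is an existence statement, and the natural WP/TP/OOD objects read off from the CIL joint distribution already do the job. The only subtle point I expect to spell out carefully is the OOD bound in the $k\neq k_0$ case, because the cross-entropy is against the complementary Bernoulli outcome and one must exploit the normalization $\sum_k \mathbf{P}(x\in\mathbf{X}_k\mid D)=1$ rather than just a pointwise inequality — otherwise the other steps are immediate consequences of Eq.~\ref{eq:cil_in_til_and_tp} and monotonicity of $-\log$.
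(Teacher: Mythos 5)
Your proof is correct and follows essentially the same route as the paper's: construct the TP as the task marginal $\sum_j \mathbf{P}(x\in\mathbf{X}_{k,j}\mid D)$, the WP from the CIL model's conditional, and the OOD detector as $\mathbf{P}'_k=\mathbf{P}(x\in\mathbf{X}_k\mid D)$, then verify each bound from $\mathbf{P}(x\in\mathbf{X}_{k_0,j_0}\mid D)\geq e^{-\eta}$. The only cosmetic differences are that the paper defines the WP in part (i) by setting it equal to the CIL joint probability rather than using the genuine conditional, and in part (iii) it cites Theorem~\ref{thm:tp_ood}(i) where you inline the same normalization argument for the $k\neq k_0$ case.
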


The detailed proof is given in Appendix~\ref{prf:necessary_condition}. This theorem 
tells that if a good CIL model is trained, then a good WP, a good TP and a good OOD detector for each task {\color{black}are always implied}. 
More importantly, by transforming Theorem \ref{thm:necessary_condition} into its contraposition, we have the following statements:
If for any WP, $H_{WP} (x) > \eta$, then $H_{CIL} (x) > \eta$.
If for any TP, $H_{TP} (x) > \eta$, then $H_{CIL} (x) > \eta$.
If for any OOD detector, $H_{OOD, k} (x) > \eta,\, k=1,\dots,T$, then $H_{CIL} (x) > \eta$.
{Regardless of whether WP and TP (or OOD detection) are defined explicitly or implicitly by a CIL algorithm,
the existence of a good WP and the existence of a good TP or 
OOD detection  are necessary conditions for a good CIL performance.}

\begin{remark}
{\color{black}It is important to note again that our study in this section is based on a CIL model that has already been built. In other words, our study tells the CIL designers what should be achieved in the final model. Clearly, one would also like to know how to design a strong CIL model based on the theoretical results, which also considers catastrophic forgetting (CF). One effective method is to make use of a strong existing TIL algorithm, which can already achieve no or little forgetting (CF), and combine it with a strong OOD detection algorithm (as mentioned earlier, most OOD detection methods can also perform WP). Thus, any improved method from the OOD detection community can be applied to CIL to produce improved CIL systems (see Sections~\ref{sec.betterOOD} and \ref{sec.HAT+CSI}).

Recall in Section~\ref{sec.related}, we reviewed prior works that have tried to use a TIL method for CIL with a task-id prediction method~\cite{von2019continual,Aljundi2016expert,rajasegaran2020itaml,abati2020conditional,henning2021posterior}. However, since they did not know that the key to the success of this approach is a strong OOD detection algorithm, they are quite weak (see Section~\ref{sec:experiments}).}

\end{remark}

\section{New CIL Techniques and Experiments}\label{sec:experiments}
Based on Theorem~\ref{thm:cil_with_op_and_ood}, we have designed several new CIL methods, each of which integrates an existing CL algorithm and an OOD detection algorithm. The OOD detection algorithm that we use can perform both within-task IND prediction (WP) and OOD detection. Our experiments have two goals: (1) to show that a good OOD detection method can help improve the accuracy of an existing CIL algorithm, and (2) to fully compare two of these methods (see some others in Sec.~\ref{sec.collas_pretrain}) with strong baselines to show that they outperform the existing strong baselines considerably.

\subsection{Datasets, CL Baselines and OOD Detection Methods}
\textbf{Datasets and CIL Tasks.} Four popular benchmark image classification datasets are used, from which six CIL problems are created following recent papers \cite{Liu2020,NEURIPS2020_b704ea2c_derpp,Zhu_2021_CVPR_pass}. \textbf{(1) \textit{MNIST}} consists of handwritten images of 10 digits with 60,000/10,000 training/testing samples. We create a CIL problem (\textbf{M-5T}) of 5 tasks with 2 consecutive classes/digits as a task. \textbf{(2) \textit{CIFAR-10}} consists of 32x32 color images of 10 classes with 50,000/10,000 training/testing samples. We create a CIL problem (\textbf{C10-5T}) of 5 tasks with 2 consecutive classes as a task. \textbf{(3) \textit{CIFAR-100}} consists of 60,000 32x32 color images with 50,000/10,000 training/testing samples. We create two CIL problems by splitting 100 classes into 10 tasks (\textbf{C100-10T}) and 20 tasks (\textbf{C100-20T}), where each task has 10 and 5 classes, respectively. \textbf{(4) \textit{Tiny-ImageNet}} has 120,000 64x64 color images of 200 classes with 500/50 images per class for training/testing. We create two CIL problems by splitting 200 classes into 5 tasks (\textbf{T-5T}) and 10 tasks (\textbf{T-10T}), where each task has 40 and 20 classes, respectively.

\textbf{Baseline CL Methods.} We include different families of CL methods: \textit{regularization}, \textit{replay}, \textit{orthogonal projection}, and \textit{parameter isolation}. 
MUC~\cite{Liu2020} and PASS~\cite{Zhu_2021_CVPR_pass} are regularization-based methods.
For replay methods, we use LwF~\cite{Li2016LwF}, iCaRL~\cite{Rebuffi2017}, Mnemonics~\cite{Liu_2020_CVPR}, BiC~\cite{wu2019large}, DER++~\cite{NEURIPS2020_b704ea2c_derpp}, and Co$^2$L~\cite{Cha_2021_ICCV_co2l}. For orthogonal projection, we use OWM~\cite{zeng2019continuous}.
Finally, for parameter isolation, we use CCG~\cite{abati2020conditional}, HyperNet~\cite{von2019continual}, HAT~\cite{Serra2018overcoming}, SupSup~\cite{supsup2020} (Sup), and PR~\cite{henning2021posterior}.\footnote{{iTAML}~\cite{rajasegaran2020itaml} is not included as it requires a batch of test data from the same task to predict the task-id. When each batch has only one test sample, which is our setting, it is very weak. For example, its CIL accuracy is only 33.5\% on C100-10T. {Expert Gate} (EG)~\cite{Aljundi2016expert} is also very weak. Its CIL accuracy is only 43.2 on M-5T. % Both iTAML and EG 
They are much weaker than many baselines. {DER}~\cite{yan2021dynamically} is not included as it expands the network after each task, which is somewhat unfair to other systems as all others do not expand the network. % Due to the expansion, 
DER can generate a large number of parameters after the last task, e.g., 117.6 millions (M) for C100-20T while our proposed methods require 44.6M (HAT+CSI) and 11.6M (Sup+CSI) (refer to Appendix~\ref{apx:n_params}). The average accuracy of DER over the 6 CL experiments is 61.4 while our methods achieve 67.9 (HAT+CSI+c) and 64.9 (Sup+CSI+c) (refer to Tab.~\ref{Tab:maintable}).} We use the official codes for the baselines except for $\text{Co}^2\text{L}$, CCG, and PR. For these three systems, we copy the results from their papers as the code for CCG is not released and we are unable to run $\text{Co}^2\text{L}$ and PR on our machines.

\textbf{OOD Detection Methods}. Two OOD detection methods are used. We combine them with the above existing CL algorithms. Both these methods can also perform \textit{within-task IND prediction} (WP). 

\textbf{(1). ODIN}: Researchers have proposed several methods to improve the OOD detection performance of a trained network by post-processing~\cite{liang2018enhancing,liu2020energy,lee2018simple_md}. ODIN~\cite{liang2018enhancing} is a representative method. It adds perturbation to input and applies a temperature scaling to the softmax output of a trained network.

\textbf{(2). CSI}: It is a recently proposed OOD detection technique~\cite{tack2020csi} that is highly effective. It is based on data and class label augmentation and supervised contrastive learning~\cite{khosla2020supervised}. Its rotation data augmentations create distributional shifted samples to act as negative data for the original samples for contrastive learning. The details of CSI is given in Appendix~\ref{apx:csi}.

\subsection{Training Details and Evaluation Metrics}
\label{sec.training}

\textbf{Training Details.} \label{sec:training_details}
For the backbone structure, we follow \cite{supsup2020,Zhu_2021_CVPR_pass,NEURIPS2020_b704ea2c_derpp}. AlexNet-like architecture~\cite{NIPS2012_c399862d_alexnet} is used for MNIST and ResNet-18~\cite{he2016deep} is used for CIFAR-10. For CIFAR-100 and Tiny-ImageNet, ResNet-18 is also used as CIFAR-10, but the number of channels are doubled to fit more classes. All the methods use the same backbone architecture except for OWM and HyperNet, for which we use their original architectures. OWM uses AlexNet. It is not obvious how to apply the technique to the ResNet structure. HyperNet uses a fully-connected network and ResNet-32 for MNIST and other datasets, respectively. We are unable to change the structure due to model initialization arguments unexplained in the original paper.
For the replay methods, we use memory buffer 200 for MNIST and CIFAR-10 and 2000 for CIFAR-100 and Tiny-ImageNet as in \cite{Rebuffi2017,NEURIPS2020_b704ea2c_derpp}. We use the hyper-parameters suggested by the authors. If we could not reproduce any result, we use 10\% of the training data as a validation set to grid-search for good hyper-parameters. For our proposed methods, we report the hyper-parameters in Appendix~\ref{apx:hyper_params}.
All the results are averages over 5 runs with random seeds.

\textbf{Evaluation Metrics.} 

\textbf{(1).} \textit{Average classification accuracy} over all classes after learning the last task. The final class prediction depends  \textit{prediction methods} (see below). We also report \textit{forgetting rate} % (i.e., backward transfer) 
in Appendix~\ref{apx:forgetting}.

\textbf{(2).} \textit{Average AUC} (Area Under the ROC Curve) over all task models for the evaluation of OOD detection. AUC is the main measure used in OOD detection papers. Using this measure, we show that a better OOD detection method will result in a better CIL performance. Let $\textit{AUC}_{k}$ be the AUC score of task $k$. It is computed by using only the model (or classes) of task $k$ to score the test data of task $k$ as the in-distribution (IND) data and the test data from other tasks as the out-of-distribution (OOD) data. The average AUC score is: $AUC = \sum_{k} \textit{AUC}_{k}/n$, where $n$ is the number of tasks.

It is not straightforward to change existing CL algorithms to include a new OOD detection method that needs training, e.g., CSI, except for TIL (task incremental learning) methods, e.g., HAT and Sup. For HAT and Sup, we can simply switch their methods for learning each task with CSI (see Sec.\ref{sec.HAT+CSI}).

\textbf{Prediction Methods.} The theoretical result in Sec.~\ref{sec.theorem} states that we use Eq.~\ref{eq:cil_in_til_and_tp} to perform the final prediction. The first probability (WP) in Eq.~\ref{eq:cil_in_til_and_tp} is easy to get as we can simply use the softmax values of the classes in each task. However, the second probability (TP) in Eq.~\ref{eq:cil_in_til_and_tp} is tricky as each task is learned without the data of other tasks. There can be many options.
We take the following approaches for prediction (which are a special case of Eq.~\ref{eq:cil_in_til_and_tp}, see below):

\textbf{(1).} For those approaches that use a single classification head to include all classes learned so far, we predict as follows (which is also the approach taken by the existing papers.) 
\begin{align}
    \hat{y} = \argmax f(x)
\end{align}
where $f(x)$ is the logit output of the network. 

\textbf{(2).} For multi-head methods (e.g., HAT, HyperNet, and Sup), which use one head for each task, we use the concatenated output as
\begin{align}
    \hat{y} = \argmax \bigoplus_{k} f(x)_{k} \label{eq:cil_pred}
\end{align}
where $\bigoplus$ indicate concatenation and $f(x)_k$ is the output of task $k$.\footnote{The {Sup} paper proposed an one-shot task-id prediction assuming that the test instances come in a batch and all belong to the same task like iTAML. We assume a single test instance per batch. Its task-id prediction results in accuracy of 50.2 on C10-5T, which is much lower than 62.6 by using Eq.~\ref{eq:cil_pred}. The task-id prediction of {HyperNet} also works poorly. The accuracy by its id prediction is 49.34 on C10-5T while it is 53.4 using Eq.~\ref{eq:cil_pred}. {PR} uses entropy to find task-id. Among many variations of PR, we use the variations that perform the best for each dataset with exemplar-free and single sample per batch at testing (i.e., no PR-BW).}

These methods (in fact, they are the same method used in two different settings) is a special case of Eq.~\ref{eq:cil_in_til_and_tp} if we define $OOD_k$ as $\sigma(\max f(x)_k )$, where $\sigma$ is the sigmoid. Hence, the theoretical results in Sec.~\ref{sec.theorem} are still applicable. We present a detailed explanation about this prediction method and some other options in Appendix~\ref{apx:diff_tp}. These two approaches work quite well.

\subsection{Better OOD Detection Produces Better CIL Performance}\label{sec.betterOOD}
The key theoretical result in Sec.~\ref{sec.theorem} is that better OOD detection will produce better CIL performance. Recall our considered methods ODIN and CSI can perform both WP and OOD detection.

\textbf{Applying ODIN.} 
We first train the baseline models using their original algorithms, and then apply temperature scaling and input noise of ODIN at testing for each task (no training data needed).
More precisely, the output of class $j$ in task $k$ changes by temperature scaling factor $\tau_{k}$ of task $k$ as
\begin{align}
    s(x; \tau_k)_j = e^{f(x)_{kj} / \tau_k } / \sum_{j} e^{f(x)_{kj} / \tau_{k}} \label{eq:odin_softmax}
\end{align}
and the input changes by the noise factor $\epsilon_k$ as
\begin{align}
    \tilde{x} = x - \epsilon_k \text{sign} (-\nabla_x \log s (x; \tau_{k})_{\hat{y}} ) \label{eq:odin_perturbation}
\end{align}
where $\hat{y}$ is the class with the maximum output value in task $k$. This is a positive adversarial example inspired by \cite{goodfellow2015explaining}. The values $\tau_k$ and $\epsilon_k$ are hyper-parameters and we use the same values for all tasks except for PASS, for which we had to use a validation set to tune $\tau_k$ (see Appendix~\ref{apx:additional_odin}).

\begin{wraptable}[29]{r}{2.3in}
\vspace{-4.6mm}
\caption{Performance comparison based on C100-10T between the original output and the output post-processed with OOD detection technique ODIN. Note that ODIN is not applicable to iCaRL and Mnemonics as they are not based on softmax but some distance functions. The results for other datasets are reported in Appendix~\ref{apx:additional_odin}.
} 
\begin{tabular}{llcc}
\toprule
Method & OOD & AUC & CIL \\
\midrule
\multirow{2}{*}{OWM} & Original & 71.31 & 28.91 \\
{} & ODIN & 70.06 & 28.88 \\
\hline
\multirow{2}{*}{MUC} & Original & 72.69 & 30.42 \\
{} & ODIN & 72.53 & 29.79 \\
\hline
\multirow{2}{*}{PASS} & Original & 69.89 & 33.00 \\
{} & ODIN & 69.60 & 31.00 \\
\hline
\multirow{2}{*}{LwF} & Original & 88.30 & 45.26 \\
{} & ODIN & 87.11 & 51.82 \\
\hline
\multirow{2}{*}{BiC} & Original & 87.89 & 52.92 \\
{} & ODIN & 86.73 & 48.65 \\
\hline
\multirow{2}{*}{DER++} & Original & 85.99 & 53.71 \\
{} & ODIN & 88.21 & 55.29 \\
\hline
\multirow{2}{*}{HAT} & Original & 77.72 & 41.06 \\
{} & ODIN & 77.80 & 41.21 \\
\hline
\multirow{2}{*}{HyperNet} & Original & 71.82 & 30.23 \\
{} & ODIN & 72.32 & 30.83 \\
\hline
\multirow{2}{*}{Sup} & Original & 79.16 & 44.58 \\
{} & ODIN & 80.58 & 46.74 \\
\bottomrule
\end{tabular}
\label{Tab:odin}
\end{wraptable}
Tab.~\ref{Tab:odin} gives the results for C100-10T. The CIL results clearly show that the CIL performance increases if the AUC increases with ODIN. For instance, the CIL of DER++ and Sup improves from 53.71 to 55.29 and 44.58 to 46.74, respectively, as the AUC increases from 85.99 to 88.21 and 79.16 to 80.58.
It shows that when this method is incorporated into each task model in existing trained CIL network, the CIL performance of the original method improves. We note that ODIN does not always improve the average AUC.
For those experienced a decrease in AUC, the CIL performance also decreases except LwF. The inconsistency of LwF is due to its severe classification bias towards later tasks as discussed in BiC~\cite{wu2019large}. The temperature scaling in ODIN has a similar effect as 
the bias correction in BiC, and the CIL of LwF becomes close to that of BiC after the correction. Regardless of whether ODIN improves AUC or not, the positive correlation between AUC and CIL (except LwF) verifies the efficacy of Theorem~\ref{thm:cil_with_op_and_ood}, indicating better OOD detection results in better CIL performances.

\textbf{Applying CSI.} We now apply the OOD detection method CSI. Due to its sophisticated data augmentation, supervised constrative learning and results ensemble, it is hard to apply CSI to other baselines without fundamentally change them except for HAT and Sup (SupSup) as these methods are parameter isolation-based TIL methods. We can simply replace their model for training each task with CSI wholesale (the full detail is given in Appendix~\ref{apx:csi}). As mentioned earlier, both HAT and SupSup as TIL methods have almost no forgetting.

Tab.~\ref{Tab:odin_csi} reports the results of using CSI and ODIN. ODIN is a weaker OOD method than CSI.
Both HAT and Sup improve greatly as the systems are equipped with a better OOD detection method CSI.
These experiment results empirically demonstrate the efficacy of Theorem~\ref{thm:cil_with_op_and_ood}, i.e., the CIL performance can be improved if a better OOD detection method is used.

\begin{table}[t]
\centering
\caption{
Average CIL and AUC of HAT and Sup with OOD detection methods ODIN and CSI. ODIN is a traditional OOD detection method while CSI is a recent OOD detection method known to be better than ODIN. As CL methods produce better OOD detection performance by CSI, their CIL performances are better than the ODIN counterparts.
}
\begin{tabular}{l l c c c c c c c c c c}
\toprule
\multicolumn{1}{c}{CL} & \multicolumn{1}{c}{OOD} & \multicolumn{2}{c}{C10-5T} & \multicolumn{2}{c}{C100-10T} &  \multicolumn{2}{c}{C100-20T} & \multicolumn{2}{c}{T-5T} & \multicolumn{2}{c}{T-10T} \\
{} & {} & AUC & CIL & AUC & CIL & AUC & CIL & AUC & CIL & AUC & CIL \\
\midrule
\multirow{2}{*}{HAT} & 
ODIN &
82.5 & 62.6 &
77.8 & 41.2 &
75.4 & 25.8 &
72.3 & 38.6 &
71.8 & 30.0 \\
{} & 
CSI &
91.2 & 87.8 & % 
84.5 & 63.3 & % 
86.5 & 54.6 & % 
76.5 & 45.7 &
78.5 & 47.1 \\ % 
\hline
\multirow{2}{*}{Sup} & 
ODIN &
82.4 & 62.6 &
80.6 & 46.7 &
81.6 & 36.4 &
74.0 & 41.1 &
74.6 & 36.5 \Tstrut \\
{} & 
CSI &
91.6 & 86.0 & % 
86.8 & 65.1 & % 
88.3 & 60.2 & % 
77.1 & 48.9 & %
79.4 & 45.7 \\
\bottomrule
\end{tabular}
\label{Tab:odin_csi}
\vspace{-3mm}
\end{table}

\begin{table}[t]
\centering
\caption{
Average accuracy after all tasks are learned. Exemplar-free methods are italicized. $\dagger$ indicates that in their original papers, PASS and Mnemonics are pre-trained with the first half of the classes. Their results with pre-train are 50.1 and 53.5 on C100-10T, respectively, which are still much lower than the proposed HAT+CSI and Sup+CSI without pre-training. We do not use pre-training in our experiment for fairness. $*$ indicates that iCaRL and Mnemonics report average incremental accuracy in their original papers. We report average accuracy over all classes after all tasks are learned.
}
\begin{tabular}{l c c c c c c}
\toprule
\multirow{1}{*}{Method}  &  \multicolumn{1}{c}{M-5T} & \multicolumn{1}{c}{C10-5T}  &  \multicolumn{1}{c}{C100-10T} &  \multicolumn{1}{c}{C100-20T} &  \multicolumn{1}{c}{T-5T} & \multicolumn{1}{c}{T-10T} \\
\midrule
\textit{OWM} & 95.8\scalebox{0.9}{$\pm$0.13} & 51.8\scalebox{0.9}{$\pm$0.05} & 28.9\scalebox{0.9}{$\pm$0.60} & 24.1\scalebox{0.9}{$\pm$0.26} & 10.0\scalebox{0.9}{$\pm$0.55} & 8.6\scalebox{0.9}{$\pm$0.42} \Tstrut \\
\textit{MUC} & 74.9\scalebox{0.9}{$\pm$0.46} & 52.9\scalebox{0.9}{$\pm$1.03} & 30.4\scalebox{0.9}{$\pm$1.18} & 14.2\scalebox{0.9}{$\pm$0.30} & 33.6\scalebox{0.9}{$\pm$0.19} & 17.4\scalebox{0.9}{$\pm$0.17} \\
\textit{PASS}$^{\dagger}$ & 76.6\scalebox{0.9}{$\pm$1.67} & 47.3\scalebox{0.9}{$\pm$0.98} & 33.0\scalebox{0.9}{$\pm$0.58} & 25.0\scalebox{0.9}{$\pm$0.69} & 28.4\scalebox{0.9}{$\pm$0.51} & 19.1\scalebox{0.9}{$\pm$0.46} \\
LwF & 85.5\scalebox{0.9}{$\pm$3.11} & 54.7\scalebox{0.9}{$\pm$1.18} & 45.3\scalebox{0.9}{$\pm$0.75} & 44.3\scalebox{0.9}{$\pm$0.46} & 32.2\scalebox{0.9}{$\pm$0.50} & 24.3\scalebox{0.9}{$\pm$0.26} \\
iCaRL$^*$  & 96.0\scalebox{0.9}{$\pm$0.43} & 63.4\scalebox{0.9}{$\pm$1.11} & 51.4\scalebox{0.9}{$\pm$0.99} & 47.8\scalebox{0.9}{$\pm$0.48} & 37.0\scalebox{0.9}{$\pm$0.41} & 28.3\scalebox{0.9}{$\pm$0.18} \\
Mnemonics$^{\dagger *}$ & 96.3\scalebox{0.9}{$\pm$0.36} & 64.1\scalebox{0.9}{$\pm$1.47} & 51.0\scalebox{0.9}{$\pm$0.34} & 47.6\scalebox{0.9}{$\pm$0.74} & 37.1\scalebox{0.9}{$\pm$0.46} & 28.5\scalebox{0.9}{$\pm$0.72} \\
BiC & 94.1\scalebox{0.9}{$\pm$0.65} & 61.4\scalebox{0.9}{$\pm$1.74} & 52.9\scalebox{0.9}{$\pm$0.64} & 48.9\scalebox{0.9}{$\pm$0.54} & 41.7\scalebox{0.9}{$\pm$0.74} & 33.8\scalebox{0.9}{$\pm$0.40} \\
DER++ & 95.3\scalebox{0.9}{$\pm$0.69} & 66.0\scalebox{0.9}{$\pm$1.20} & 53.7\scalebox{0.9}{$\pm$1.30} & 46.6\scalebox{0.9}{$\pm$1.44} & 35.8\scalebox{0.9}{$\pm$0.77} & 30.5\scalebox{0.9}{$\pm$0.47} \\
Co$^2$L &  & 65.6 &  &  &  &  \\
\textit{CCG}  & 97.3 & 70.1 &  &  &  & \\
\textit{HAT} & 81.9\scalebox{0.9}{$\pm$3.74} & 62.7\scalebox{0.9}{$\pm$1.45} & 41.1\scalebox{0.9}{$\pm$0.93} & 25.6\scalebox{0.9}{$\pm$0.51} & 38.5\scalebox{0.9}{$\pm$1.85} & 29.8\scalebox{0.9}{$\pm$0.65} \\
\textit{HyperNet} & 56.6\scalebox{0.9}{$\pm$4.85} & 53.4\scalebox{0.9}{$\pm$2.19} & 30.2\scalebox{0.9}{$\pm$1.54} & 18.7\scalebox{0.9}{$\pm$1.10} & 7.9\scalebox{0.9}{$\pm$0.69} & 5.3\scalebox{0.9}{$\pm$0.50} \\
\textit{Sup} & 70.1\scalebox{0.9}{$\pm$1.51} & 62.4\scalebox{0.9}{$\pm$1.45} & 44.6\scalebox{0.9}{$\pm$0.44} & 34.7\scalebox{0.9}{$\pm$0.30} & 41.8\scalebox{0.9}{$\pm$1.50} & 36.5\scalebox{0.9}{$\pm$0.36} \\
\textit{PR-Ent} & 74.1 & 61.9 & 45.2 & & & \\
\hline
\textit{HAT+CSI} & 94.4\scalebox{0.9}{$\pm$0.26} & 87.8\scalebox{0.9}{$\pm$0.71} & 63.3\scalebox{0.9}{$\pm$1.00} & 54.6\scalebox{0.9}{$\pm$0.92} & 45.7\scalebox{0.9}{$\pm$0.26} & 47.1\scalebox{0.9}{$\pm$0.18} \\
\textit{Sup+CSI} & 80.7\scalebox{0.9}{$\pm$2.71} & 86.0\scalebox{0.9}{$\pm$0.41} & 65.1\scalebox{0.9}{$\pm$0.39} & 60.2\scalebox{0.9}{$\pm$0.51} & 48.9\scalebox{0.9}{$\pm$0.25} & 45.7\scalebox{0.9}{$\pm$0.76} \\
HAT+CSI+c & 96.9\scalebox{0.9}{$\pm$0.30} & 88.0\scalebox{0.9}{$\pm$0.48} & 65.2\scalebox{0.9}{$\pm$0.71} & 58.0\scalebox{0.9}{$\pm$0.45} & 51.7\scalebox{0.9}{$\pm$0.37} & 47.6\scalebox{0.9}{$\pm$0.32} \\
Sup+CSI+c & 81.0\scalebox{0.9}{$\pm$2.30} & 87.3\scalebox{0.9}{$\pm$0.37} & 65.2\scalebox{0.9}{$\pm$0.37} & 60.5\scalebox{0.9}{$\pm$0.64} & 49.2\scalebox{0.9}{$\pm$0.28} & 46.2\scalebox{0.9}{$\pm$0.53} \\ 
\bottomrule
\end{tabular}
\label{Tab:maintable}
 \vspace{-3mm}
\end{table}

\subsection{Full Comparison of HAT+CSI and Sup+CSI with Baselines} \label{sec.HAT+CSI}
We now make a full comparison of the two strong systems (HAT+CSI and Sup+CSI) designed based on the theoretical results. These combinations are particularly attractive because both HAT and Sup are TIL systems and have little or no CF. Then a strong OOD method (that can also perform WP (within-task/IND prediction) will result in a strong CIL method.  
Since HAT and Sup are exemplar-free CL methods, HAT+CSI and Sup+CSI also do not need to save any previous task data. Tab.~\ref{Tab:maintable} shows that HAT and Sup equipped with CSI outperform the baselines by large margins.
DER++, the best replay method, achieves 66.0 and 53.7 on C10-5T and C100-10T, respectively, while HAT+CSI achieves 87.8 and 63.3 {and Sup+CSI achieves 86.0 and 65.1}. The large performance gap remains consistent in more challenging problems, T-5T and T-10T. We note that Sup works very poorly on M-5T, but Sup+CSI improved it drastically, although still very weak compared to HAT+CSI.

Due to the definition of OOD in the prediction method and the fact that each task is trained separately in HAT and Sup, the outputs $f(x)_k$ from different tasks can be in different scales, which will result in incorrect predictions. To deal with the problem, we can calibrate the output as $\alpha_k f(x)_k + \beta_k$ and use $OOD_k = \sigma ( \alpha_k f(x)_k + \beta_k )$. The optimal $\alpha_k^*$ and $\beta_k^*$ for each task $k$ can be found by optimization with a memory buffer to save a very small number of training examples from previous tasks like that in the replay-based methods. We refer the calibrated methods as HAT+CSI+c and Sup+CSI+c. They are trained by using the memory buffer of the same size as the replay methods (see Sec.~\ref{sec.training}). Tab.~\ref{Tab:maintable} shows that the calibration improves from their memory free versions, i.e., without calibration. We provide the details about how to train the calibration parameters $\alpha_k$ and $\beta_k$ in Appendix~\ref{apx:calibration}.

{\color{black}As shown in Theorem~\ref{thm:ce}, the CIL performance also depends on the TIL (WP) performance. We compare the TIL accuracies of the baselines and our methods in Tab.~\ref{Tab:til}. Our systems again outperform the baselines by large margins on more challenging datasets (e.g., CIFAR100 and Tiny-ImageNet).
}

\begin{table}[t]
\centering
\caption{
{TIL (WP) results of 3 best performing baselines and our methods. The full results are given in Appendix~\ref{apx:til_results}. The calibrated versions (+c) of our methods are omitted as calibration does not affect TIL performances.}
}
\begin{tabular}{l c c c c c c}
\toprule
\multirow{1}{*}{Method}  &  \multicolumn{1}{c}{M-5T} & \multicolumn{1}{c}{C10-5T}  &  \multicolumn{1}{c}{C100-10T} &  \multicolumn{1}{c}{C100-20T} &  \multicolumn{1}{c}{T-5T} & \multicolumn{1}{c}{T-10T} \\
\midrule
DER++ & 99.7\scalebox{0.9}{$\pm$0.08} & 92.0\scalebox{0.9}{$\pm$0.54} & 84.0\scalebox{0.9}{$\pm$9.43} & 86.6\scalebox{0.9}{$\pm$9.44} & 57.4\scalebox{0.9}{$\pm$1.31} & 60.0\scalebox{0.9}{$\pm$0.74} \\
HAT & 99.9\scalebox{0.9}{$\pm$0.02} & 96.7\scalebox{0.9}{$\pm$0.18} & 84.0\scalebox{0.9}{$\pm$0.23} & 85.0\scalebox{0.9}{$\pm$0.98} & 61.2\scalebox{0.9}{$\pm$0.72} & 63.8\scalebox{0.9}{$\pm$0.41} \\
Sup & 99.6\scalebox{0.9}{$\pm$0.01} & 96.6\scalebox{0.9}{$\pm$0.21} & 87.9\scalebox{0.9}{$\pm$0.27} & 91.6\scalebox{0.9}{$\pm$0.15} & 64.3\scalebox{0.9}{$\pm$0.24} & 68.4\scalebox{0.9}{$\pm$0.22} \\
\hline
HAT+CSI & 99.9\scalebox{0.9}{$\pm$0.00} & 98.7\scalebox{0.9}{$\pm$0.06} & 92.0\scalebox{0.9}{$\pm$0.37} & 94.3\scalebox{0.9}{$\pm$0.06} & 68.4\scalebox{0.9}{$\pm$0.16} & 72.4\scalebox{0.9}{$\pm$0.21} \\
Sup+CSI & 99.0\scalebox{0.9}{$\pm$0.08} & 98.7\scalebox{0.9}{$\pm$0.07} & 93.0\scalebox{0.9}{$\pm$0.13} & 95.3\scalebox{0.9}{$\pm$0.20} & 65.9\scalebox{0.9}{$\pm$0.25} & 74.1\scalebox{0.9}{$\pm$0.28} \\
\bottomrule
\end{tabular}
\label{Tab:til}
\vspace{-2mm}
\end{table}
{\color{black}\subsection{Implications for Existing CL Methods, Open-World Learning and Future Research} \label{sec.collas_pretrain}

{\color{black}
\textbf{Implication for regularization and replay methods.} Regularization-based (exemplar-free) methods try to protect important parameters of old tasks to mitigate CF. However, since the training of each task does not consider OOD detection, TP will be weak, which causes difficulty for \textit{inter-task class separation} (ICS) and thus low CL accuracy. Replay-based methods are better as the replay data from old tasks can be naturally regarded as OOD data for the current task, then a better OOD model is built, which improves TP. 
However, since the replay data is small. the OOD model is sub-optimal, especially for earlier tasks as their training cannot see any future task data. Thus for both approaches, it will be beneficial to consider CF and OOD together in learning each task (e.g., \cite{kim2022multi}). 

}

\textbf{Implication for open-world learning.} {\color{black}Since our theory says that CL needs OOD detection, and OOD detection is also the first step in open-world learning (OWL), CL and OWL naturally work together to achieve \textit{self-motivated open-world continual learning}~\cite{liu2021self} for autonomous learning or AI autonomy. That is, the AI agent can continually discover new tasks (OOD detection) and incrementally learn the tasks (CL) all on its own with no involvement of human engineers. {\color{black}Further afield, this work is also related to curiosity-driven self-supervised learning~\cite{pathak2017curiosity} in reinforcement learning and 3D navigation.}
}

\textbf{Limitation and future work.} {\color{black}The proposed theory provides a principled guidance on what needs to be done in order to achieve good CIL results, but it gives no guidance on how to do it. Although two example techniques are presented and evaluated, they are empirical. There are many options to define WP and TP (or OOD). An idea in \citep{guo2022online} may be helpful in this regard. \citep{guo2022online} argues that a continual learner should learn \textit{holistic feature representations} of the input data, meaning to learn as many features as possible from the input data. The rationale is that 
if the system can learn all possible features from each task, then a future task does not have to learn those shared/intersecting features by modifying the parameters, which will result in less CF and also better ICS. {A full representation of the IND data also improves OOD detection because the OOD score of a data point is basically the distance between the data point and the IND distribution. Only capturing a subset of features (e.g., by cross entropy) will result in poor OOD detection~\cite{hu2020hrn} because those missing features may be necessary to separate IND and some OOD data.} In our future work, we will study how to optimize WP and TP/OOD and find the necessary conditions for them to do well.} 
}

\section{Conclusion}
{\color{black}This paper proposed a theoretical study on how to solve the highly challenging continual learning (CL) problem.  \textit{class incremental learning} (CIL) (the other popular CL setting is \textit{task incremental learning} (TIL)). The theoretical result provides a principled guidance for designing better CIL algorithms.
The paper first decomposed the CIL prediction into \textit{within-task prediction} (WP) and \textit{task-id prediction} (TP). WP is basically TIL.  The paper further theoretically demonstrated that TP is correlated with \textit{out-of-distribution} (OOD) detection.
It then proved that a good performance of the two is both  necessary and sufficient for good CIL performances. 
Based on the theoretical result, several new CIL methods were designed. They outperform strong baselines in CIL and also in TIL by a large margin. Finally, we also discussed the implications for existing CL techniques and open-world learning.}

\vspace{-1mm}
\section*{Acknowledgments}
\vspace{-1mm}
{\color{black}The work of Gyuhak Kim, Zixuan Ke and Bing Liu was supported in part by a research contract from KDDI, two NSF grants (IIS-1910424 and IIS-1838770), and a DARPA contract HR001120C0023.}

\bibliographystyle{unsrt}
\bibliography{neurips_2022.bib}

\newpage

%%%%%%%%%%%%%%%%%%%%%%%%%%%%%%%%%%%%%%%%%%%%%%%%%%%%%%%%%%%%

%%%%%%%%%%%%%%%%%%%%%%%%%%%%%%%%%%%%%%%%%%%%%%%%%%%%%%%%%%%%

\appendix

\section{Proof of Theorems and Corollaries}
\subsection{Proof of Theorem~\ref{thm:ce}}
\label{prf:ce}
\begin{proof}
Since 
$$
\begin{aligned}
    H_{CIL}(x) &= H(y, \{\mathbf{P}(x \in \mathbf{X}_{k, j} | D)\}_{k, j}) \\
    &= - \sum_{k, j} y_{k, j} \log \mathbf{P}(x \in \mathbf{X}_{k, j} | D) \\
    &= - \log \mathbf{P}(x \in \mathbf{X}_{k_0, j_0} | D),
\end{aligned}
$$
$$
\begin{aligned}
    H_{WP}(x) &= H(\Tilde{y}, \{\mathbf{P}(x \in \mathbf{X}_{k_0, j} | x \in \mathbf{X}_{k_0}, D)\}_{j}) \\
    &= - \sum_{j} y_{k_0, j} \log \mathbf{P}(x \in \mathbf{X}_{k_0, j} | x \in \mathbf{X}_{k_0}, D) \\
    &= - \log \mathbf{P}(x \in \mathbf{X}_{k_0, j_0} | x \in \mathbf{X}_{k_0}, D),
\end{aligned}
$$
and 
$$
\begin{aligned}
    H_{TP}(x) &= H(\Bar{y}, \{\mathbf{P}(x \in \mathbf{X}_k | D)\}_{k}) \\
    &= - \sum_{k} \Bar{y}_{k} \log \mathbf{P}(x \in \mathbf{X}_{k} | D) \\
    &= - \log \mathbf{P}(x \in \mathbf{X}_{k_0} | D),
\end{aligned}
$$
we have 
$$
\begin{aligned}
    H_{CIL}(x) &= - \log \mathbf{P}(x \in \mathbf{X}_{k_0, j_0} | D) \\ 
    &= - \log \mathbf{P}(x \in \mathbf{X}_{k_0, j_0} | x \in \mathbf{X}_{k_0}, D) - \log \mathbf{P}(x \in \mathbf{X}_{k_0} | D) \\
    &= H_{WP}(x) + H_{TP}(x) \\
    &\leq \epsilon + \delta. 
\end{aligned}
$$
\end{proof}

\subsection{Proof of Corollary~\ref{cor:expectation}.}
\label{prf:expectation}
\begin{proof}
By proof of Theorem~\ref{thm:ce}, we have 
$$
H_{CIL} (x) = H_{WP} (x) + H_{TP} (x).
$$
Taking expectations on both sides, we have i)
$$
\begin{aligned}
    \mathbb{E}_{x\sim U(\mathbf{X})} [H_{CIL} (x)] &= \mathbb{E}_{x \sim U(\mathbf{X})} [H_{WP} (x)] +\mathbb{E}_{x\sim U(\mathbf{X})} [H_{TP} (x)] \\
    &\leq \mathbb{E}_{x \sim U(\mathbf{X})} [H_{WP} (x)] + \delta.
\end{aligned}
$$
and ii)
$$
\begin{aligned}
    \mathbb{E}_{x\sim U(\mathbf{X})} [H_{CIL} (x)] &= \mathbb{E}_{x \sim U(\mathbf{X})} [H_{WP} (x)] + \mathbb{E}_{x\sim U(\mathbf{X})} [H_{TP} (x)] \\
    &\leq \epsilon + \mathbb{E}_{x\sim U(\mathbf{X})} [H_{TP} (x)].
\end{aligned}
$$
\end{proof}

\subsection{Proof of Theorem~\ref{thm:tp_ood}.}
\label{prf:tp_ood}
\begin{proof}
i) Assume $x \in \mathbf{X}_{k_0}$. 

For $k = k_0$, we have
$$
\begin{aligned}
    H_{OOD, k_0} (x) &= - \log \mathbf{P}'_{k_0} (x \in \mathbf{X}_{k_0} | D) \\
    &= - \log \mathbf{P}(x \in \mathbf{X}_{k_0} | D) \\
    &= H_{TP} (x) \leq \delta.
\end{aligned}
$$
For $k \neq k_0$, we have
$$
\begin{aligned}
    H_{OOD, k} (x) &= - \log \mathbf{P}'_k (x \notin \mathbf{X}_k | D) \\
    &= - \log (1 - \mathbf{P}'_k (x \in \mathbf{X}_k | D))\\
    &= - \log (1 - \mathbf{P} (x \in \mathbf{X}_k | D))\\
    &= - \log \mathbf{P}(x \in \cup_{k' \neq k} \mathbf{X}_{k'} | D) \\
    &\leq - \log \mathbf{P}(x \in \mathbf{X}_{k_0} | D) \\
    &= H_{TP} (x) \leq \delta.
\end{aligned}
$$

ii) Assume $x \in \mathbf{X}_{k_0}$. 

For $k = k_0$, by $H_{OOD, {k_0}} (x) \leq \delta_{k_0}$, we have 
$$- \log \mathbf{P}'_{k_0} (x \in \mathbf{X}_{k_0} | D) \leq \delta_{k_0},$$ 
which means 
$$\mathbf{P}'_{k_0} (x \in \mathbf{X}_{k_0} | D) \geq e^{-\delta_{k_0}}.$$
For $k \neq k_0$, by $H_{OOD, {k}} (x) \leq \delta_{k}$, we have 
$$- \log \mathbf{P}'_{k} (x \notin \mathbf{X}_{k} | D) \leq \delta_{k},$$
which means 
$$\mathbf{P}'_{k} (x \in \mathbf{X}_{k} | D) \leq 1 - e^{-\delta_{k}}.$$

Therefore, we have
$$
\begin{aligned}
    \mathbf{P} (x \in \mathbf{X}_{k_0} | D) &= \frac{\mathbf{P}'_{k_0} (x \in \mathbf{X}_{k_0} |D)}{\sum_{k'} \mathbf{P}'_{k'} (x \in \mathbf{X}_{k'} |D)} \\
    &\geq \frac{e^{-\delta_{k_0}}}{1 + \sum_{k\neq k_0} 1 - e^{-\delta_k}} \\
    &= \frac{e^{-\delta_{k_0}}}{e^{-\delta_{k_0}} + \sum_{k} 1 - e^{-\delta_k}} \\
    &= \frac{1}{1 + e^{\delta_{k_0}}\sum_{k} 1 - e^{-\delta_k}}.
\end{aligned}
$$

Hence, 
$$
\begin{aligned}
    H_{TP} (x) &= -\log \mathbf{P} (x \in \mathbf{X}_{k_0} | D) \\
    &\leq - \log \frac{1}{1 + e^{\delta_{k_0}}\sum_{k} 1 - e^{-\delta_k}} \\
    &= \log [1 + e^{\delta_{k_0}}\sum_{k} 1 - e^{-\delta_k}] \\
    &\leq e^{\delta_{k_0}} (\sum_k 1 - e^{-\delta_k}) \\
    &= (\sum_k \mathbf{1}_{x \in \mathbf{X}_k} e^{\delta_{k}}) (\sum_k 1 - e^{-\delta_k}).
\end{aligned}
$$

\end{proof}

\subsection{Proof of Theorem~\ref{thm:cil_with_op_and_ood}.}
\label{prf:cil_with_op_and_ood}
\begin{proof}
Using Theorem~\ref{thm:ce} and \ref{thm:tp_ood},
$$
\begin{aligned}
    H_{CIL}(x) &= - \log \mathbf{P}(x \in \mathbf{X}_{k_0, j_0} | D) \\ 
    &= - \log \mathbf{P}(x \in \mathbf{X}_{k_0, j_0} | x \in \mathbf{X}_{k_0}, D) - \log \mathbf{P}(x \in \mathbf{X}_{k_0} | D) \\
    &= H_{WP}(x) + H_{TP}(x) \\
    &\leq \epsilon + H_{TP}(x) \\
    &\leq \epsilon + (\sum_k \mathbf{1}_{x \in \mathbf{X}_k}  e^{\delta_{k}}) (\sum_k 1 - e^{-\delta_k})
\end{aligned}
$$
\end{proof}

\subsection{Proof of Theorem~\ref{thm:necessary_condition}.}
\label{prf:necessary_condition}
\begin{proof}
i) Assume $x \in \mathbf{X}_{k_0, j_0} \subset \mathbf{X}_{k_0}$. 

Define $\mathbf{P} (x \in \mathbf{X}_{k, j} | x \in \mathbf{X}_k, D) = \mathbf{P} (x \in \mathbf{X}_{k, j} | D)$.

According to proof of Theorem~\ref{thm:ce}, 
$$
\begin{aligned}
H_{WP} (x) &= -\log \mathbf{P} (x \in \mathbf{X}_{k_0, j_0} | x \in \mathbf{X}_{k_0}, D), \\
H_{CIL} (x) &= -\log \mathbf{P} (x \in \mathbf{X}_{k_0, j_0} | D).
\end{aligned}
$$
Hence, we have
$$
\begin{aligned}
H_{WP} (x) &= -\log \mathbf{P} (x \in \mathbf{X}_{k_0, j_0} | x \in \mathbf{X}_{k_0}, D) \\
&= -\log \mathbf{P} (x \in \mathbf{X}_{k_0, j_0} | D) \\ 
&= H_{CIL} (x) \leq \eta.
\end{aligned}
$$

ii) Assume $x \in \mathbf{X}_{k_0, j_0} \subset \mathbf{X}_{k_0}$. 

Define $\mathbf{P} (x \in \mathbf{X}_{k} | D) = \sum_j \mathbf{P} (x \in \mathbf{X}_{k, j} | D)$.

According to proof of Theorem~\ref{thm:ce}, 
$$
\begin{aligned}
H_{TP} (x) &= -\log \mathbf{P} (x \in \mathbf{X}_{k_0} | D), \\
H_{CIL} (x) &= -\log \mathbf{P} (x \in \mathbf{X}_{k_0, j_0} | D).
\end{aligned}
$$
Hence, we have 
$$
\begin{aligned}
H_{TP} (x) &= -\log \mathbf{P} (x \in \mathbf{X}_{k_0} | D) \\
&= -\log \sum_j \mathbf{P} (x \in \mathbf{X}_{k_0, j} | D) \\
&\leq -\log \mathbf{P} (x \in \mathbf{X}_{k_0, j_0} | D) \\
&= H_{CIL} (x) \leq \eta.
\end{aligned}
$$

iii) Assume $x \in \mathbf{X}_{k_0, j_0} \subset \mathbf{X}_{k_0}$. 

Define $\mathbf{P}'_i (x \in \mathbf{X}_{k} | D) = \mathbf{P} (x \in \mathbf{X}_{k} | D) = \sum_j \mathbf{P} (x \in \mathbf{X}_{k, j} | D)$.

According to proof of Theorem~\ref{thm:necessary_condition} ii), we have
$$
H_{TP} (x) \leq \eta.
$$
According to proof of Theorem~\ref{thm:tp_ood} i), we have 
$$
H_{OOD, i} (x) \leq H_{TP} (x).
$$
Therefore, 
$$
H_{OOD, i} (x) \leq H_{TP} (x) \leq \eta.
$$

\end{proof}

\section{Additional Results and Explanation Regarding Table 1 in the Main Paper} \label{apx:additional_odin}
In Sec.~\ref{sec.betterOOD}, we showed that a better OOD detection improves CIL performance. For the post-processing method ODIN, we only reported the results on C100-10T due to space limitations. Tab.~\ref{Tab:odin_additional} shows the results on the other datasets.
\begin{table}
    \centering
    \caption{
    Performance comparison between the original output and output post-processed with OOD detection technique ODIN. Note that ODIN is not applicable to iCaRL and Mnemonics as they are not based on softmax but some distance functions. The result for C100-10T are reported in the main paper.
    }
    \resizebox{5.5in}{!}{
    \begin{tabular}{ll cc cc cc cc cc}
    \toprule
    {} & {} & \multicolumn{2}{c}{M-5T} & \multicolumn{2}{c}{C10-5T} & \multicolumn{2}{c}{C100-20T} & \multicolumn{2}{c}{T-5T} & \multicolumn{2}{c}{T-10T} \\
    Method & OOD & AUC & CIL & AUC & CIL & AUC & CIL & AUC & CIL & AUC & CIL \\
    \midrule
    \multirow{2}{*}{OWM} & Original & 99.13 & 95.81 & 81.33 & 51.79 & 71.90 & 24.15 & 58.49 & 10.00 & 59.48 & 8.57 \\
    {} & ODIN & 98.86 & 95.16 & 71.72 & 40.65 & 68.52 & 23.05 & 58.46 & 10.77 & 59.38 & 9.52 \\
    \hline
    \multirow{2}{*}{MUC} & Original & 92.27 & 74.90 & 79.49 & 52.85 & 66.20 & 14.19 & 68.42 & 33.57 & 62.63 & 17.39 \\
    {} & ODIN & 92.67 & 75.71 & 79.54 & 53.22 & 65.72 & 14.11 & 68.32 & 33.45 & 62.17 & 17.27 \\
    \hline
    \multirow{2}{*}{PASS} & Original & 98.74 & 76.58 & 66.51 & 47.34 & 70.26 & 24.99 & 65.18 & 28.40 & 63.27 & 19.07 \\
    {} & ODIN & 90.40 & 74.33 & 63.08 & 35.20 & 69.81 & 21.83 & 65.93 & 29.03 & 62.73 & 17.78 \\
    \hline
    \multirow{2}{*}{LwF} & Original & 99.19 & 85.46 & 89.39 & 54.67 & 89.84 & 44.33 & 78.20 & 32.17 & 79.43 & 24.28 \\
    {} & ODIN & 98.52 & 90.39 & 88.94 & 63.04 & 88.68 & 47.56 & 76.83 & 36.20 & 77.02 & 28.29 \\
    \hline
    \multirow{2}{*}{BiC} & Original & 99.40 & 94.11 & 90.89 & 61.41 & 89.46 & 48.92 & 80.17 & 41.75 & 80.37 & 33.77 \\
    {} & ODIN & 98.57 & 95.14 & 91.86 & 64.29 & 87.89 & 47.40 & 74.54 & 37.40 & 76.27 & 29.06 \\
    \hline
    \multirow{2}{*}{DER++} & Original & 99.78 & 95.29 & 90.16 & 66.04 & 85.44 & 46.59 & 71.80 & 35.80 & 72.41 & 30.49 \\
    {} & ODIN & 99.09 & 94.96 & 87.08 & 63.07 & 87.72 & 49.26 & 73.92 & 37.87 & 72.91 & 32.52 \\
    \hline
    \multirow{2}{*}{HAT} & Original & 94.46 & 81.86 & 82.47 & 62.67 & 75.35 & 25.64 & 72.28 & 38.46 & 71.82 & 29.78 \\
    {} & ODIN & 94.56 & 82.06 & 82.45 & 62.60 & 75.36 & 25.84 & 72.31 & 38.61 & 71.83 & 30.01 \\
    \hline
    \multirow{2}{*}{HyperNet} & Original & 85.83 & 56.55 & 78.54 & 53.40 & 72.04 & 18.67 & 54.58 & 7.91 & 55.37 & 5.32 \\
    {} & ODIN & 86.89 & 64.31 & 79.39 & 56.72 & 73.89 & 23.8 & 54.60 & 8.64 & 55.53 & 6.91 \\
    \hline
    \multirow{2}{*}{Sup} & Original & 90.70 & 70.06 & 79.16 & 62.37 & 81.14 & 34.70 & 74.13 & 41.82 & 74.59 & 36.46 \\
    {} & ODIN & 90.68 & 69.70 & 82.38 & 62.63 & 81.48 & 36.35 & 73.96 & 41.10 & 74.61 & 36.46 \\
    \bottomrule
    \end{tabular}
    }
    \label{Tab:odin_additional}
\end{table}

A continual learning method with a better AUC shows a better CIL performance than other methods with lower AUC. 
For instance, original HAT achieves AUC of 82.47 while HyperNet achieves 78.54 on C10-5T. The CIL for HAT is 62.67 while it is 53.40 for HyperNet. However, there are some exceptions that this comparison does not hold. An example is LwF. Its AUC and CIL are 89.39 and 54.67 on C10-5T. Although its AUC is better than HAT, the CIL is lower. This is due to the fact that CIL improves with WP and TP according to Theorem~\ref{thm:ce}. The contraposition of Theorem~\ref{thm:necessary_condition} also says 
if the cross-entropy of TIL is large, that of CIL is also large. Indeed, the average within-task prediction (WP) accuracy for LwF on C10-5T is 95.2 while the same for HAT is 96.7. Improving WP is also important in achieving good CIL performances.

For PASS, we had to tune $\tau_k$ using a validation set. This is because the softmax in Eq.~\ref{eq:odin_softmax} 
improves AUC by making the IND (in-distribution) and OOD scores more separable within a task, but deteriorates the final scores across tasks. To be specific, the test instances are predicted as one of the classes in the first task after softmax because the relative values between classes in task 1 is larger than the other tasks in PASS. Therefore, larger $\tau_1$ and smaller $\tau_k$, for $k > 1$, are chosen to compensate the relative values.

\section{Definitions of TP} \label{apx:diff_tp}
As noted in the main paper, the class prediction in Eq.~\ref{eq:cil_in_til_and_tp} varies by definition of WP and TP. The precise definition of WP and TP depends on implementation. Due to this subjectivity, we follow the prediction method as the existing methods in continual learning, which is the $\argmax$ over the output. In this section, we show that the $\argmax$ over output is a special case of Eq.~\ref{eq:cil_in_til_and_tp}. We also provide CIL results using different definitions of TP.

We first establish another theorem. This is an extension of Theorem~\ref{thm:tp_ood} and connects the standard prediction method to our analysis.
\begin{theorem}[Extension of Theorem~\ref{thm:tp_ood}]
\label{thm:tp_ood_extension}

i) If $H_{TP} (x) \leq \delta$, let $\mathbf{P}'_k (x \in \mathbf{X}_k | D) = \mathbf{P} (x \in \mathbf{X}_k | D)^{1 / \tau_k}$, $\forall \tau_k > 0$, then $H_{OOD, k} (x) \leq \max (\delta /\tau_k, - \log (1 - (1 - e^{-\delta})^{1 / \tau_k}), \forall\, k = 1, \dots, T$. 

ii) If $H_{OOD, k} (x) \leq \delta_k, k=1,\dots,T$, let $\mathbf{P} (x \in \mathbf{X}_k | D) = \frac{\mathbf{P}_k' (x \in \mathbf{X}_k |D)^{1 / \tau_k}}{\sum_j \mathbf{P}_j' (x \in \mathbf{X}_j |D)^{1 / \tau_j}}$, $\forall \tau_k > 0$, then $H_{TP} (x) \leq \sum_k \frac{\mathbf{1}_{x \in \mathbf{X}_k} \delta_{k}}{\tau_{k}} + \frac{\sum_{k} (1 - e^{-\delta_k})^{1 / \tau_k}}{\sum_k \mathbf{1}_{x \in \mathbf{X}_k} (1 - (1 - e^{-\delta_{k}})^{1 / \tau_{k}})}$, where $\mathbf{1}_{x \in \mathbf{X}_k}$ is an indicator function.
\end{theorem}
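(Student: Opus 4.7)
The plan is to treat the temperature-scaled definitions as a smooth deformation of the unit-temperature case in Theorem~\ref{thm:tp_ood}, so the existing proof skeleton carries over, and to reduce the final estimate to the elementary inequality $\log(1+t) \leq t$. In both parts I condition on the event $x \in \mathbf{X}_{k_0}$ so that the indicator sums collapse to the single term $k = k_0$.

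For part (i), the $k = k_0$ branch is a direct log computation: the definition $\mathbf{P}'_{k_0}(x \in \mathbf{X}_{k_0} | D) = \mathbf{P}(x \in \mathbf{X}_{k_0} | D)^{1/\tau_{k_0}}$ gives $H_{OOD,k_0}(x) = H_{TP}(x)/\tau_{k_0} \leq \delta/\tau_{k_0}$. For $k \neq k_0$, the hypothesis $H_{TP}(x) \leq \delta$ yields $\mathbf{P}(x \in \mathbf{X}_{k_0} | D) \geq e^{-\delta}$ and hence $\mathbf{P}(x \in \mathbf{X}_k | D) \leq 1 - e^{-\delta}$; raising to the $1/\tau_k$ power and taking $-\log$ of the complement produces $-\log(1-(1-e^{-\delta})^{1/\tau_k})$. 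The maximum of these two expressions covers both branches.

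For part (ii), I mirror the proof of Theorem~\ref{thm:tp_ood} ii). From $H_{OOD,k}(x) \leq \delta_k$ I read off $\mathbf{P}'_{k_0}(x \in \mathbf{X}_{k_0} | D)^{1/\tau_{k_0}} \geq e^{-\delta_{k_0}/\tau_{k_0}}$ and $\mathbf{P}'_k(x \in \mathbf{X}_k | D)^{1/\tau_k} \leq (1-e^{-\delta_k})^{1/\tau_k}$ for $k \neq k_0$. Bounding the $k_0$-term of the denominator of $\mathbf{P}(x \in \mathbf{X}_{k_0} | D)$ by $1$ and rewriting $1 + \sum_{k \neq k_0}(1-e^{-\delta_k})^{1/\tau_k} = (1 - (1-e^{-\delta_{k_0}})^{1/\tau_{k_0}}) + \sum_k (1-e^{-\delta_k})^{1/\tau_k}$ produces a lower bound on $\mathbf{P}(x \in \mathbf{X}_{k_0} | D)$. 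Taking $-\log$, pulling out a factor of $1 - (1-e^{-\delta_{k_0}})^{1/\tau_{k_0}}$ inside the logarithm, discarding the resulting non-positive $\log(1 - (1-e^{-\delta_{k_0}})^{1/\tau_{k_0}})$ term, and applying $\log(1+t) \leq t$ to what remains gives exactly the indicator-weighted bound in the statement.

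The main obstacle I foresee is bookkeeping at the denominator step: the sharper lower bound $\mathbf{P}'_{k_0}(x \in \mathbf{X}_{k_0}|D)^{1/\tau_{k_0}} \geq e^{-\delta_{k_0}/\tau_{k_0}}$ would yield a slightly tighter but structurally different expression, so I must deliberately use the loose bound $\mathbf{P}'_{k_0}(x \in \mathbf{X}_{k_0}|D)^{1/\tau_{k_0}} \leq 1$ in order to expose the $1 - (1-e^{-\delta_{k_0}})^{1/\tau_{k_0}}$ factor that appears in the second summand of the claim. I would also verify that this factor is strictly positive (equivalently $\delta_{k_0} > 0$), which holds except in the degenerate case $\mathbf{P}'_{k_0} = 1$, where the bound is trivial.
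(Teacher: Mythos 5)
Your proposal is correct and follows essentially the same route as the paper's proof: the same two-branch computation for part (i), and for part (ii) the same chain of bounding the $k_0$ denominator term by $1$, rewriting it as $1-(1-e^{-\delta_{k_0}})^{1/\tau_{k_0}}+\sum_k(1-e^{-\delta_k})^{1/\tau_k}$, dropping the non-positive log term, and finishing with $\log(1+t)\leq t$. The only quibble is your final caveat: the factor $1-(1-e^{-\delta_{k_0}})^{1/\tau_{k_0}}$ is automatically strictly positive for any finite $\delta_{k_0}\geq 0$ (it is not equivalent to $\delta_{k_0}>0$), so no extra degenerate-case check is needed.
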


In Theorem \ref{thm:tp_ood_extension} (proof appears later), we can observe that $\delta / \tau_k$ decreases with the increase of $\tau_k$, while $- \log (1 - (1 - e^{-\delta})^{1 / \tau_k})$ increases. Hence, when TP is given, let $\delta = H_{TP} (x)$, we can find the optimal $\tau_i$ to define OOD by solving $\delta /\tau_k = - \log (1 - (1 - e^{-\delta})^{1 / \tau_k})$. 
Similarly, given OOD, let $\delta_k = H_{OOD, k} (x)$, we can find the optimal $\tau_1, \dots, \tau_T$ to define TP by finding the global minima of $\sum_k \frac{\mathbf{1}_{x \in \mathbf{X}_k} \delta_{k}}{\tau_{k}} + \frac{\sum_{k} (1 - e^{-\delta_k})^{1 / \tau_k}}{\sum_k \mathbf{1}_{x \in \mathbf{X}_k} (1 - (1 - e^{-\delta_{k}})^{1 / \tau_{k}})}$. The optimal $\tau_k$ can be found using a memory buffer to save a small number of previous data like that in a replay-based continual learning method.

In Theorem~\ref{thm:tp_ood_extension} (ii), let $\mathbf{P}'_k (x \in \mathbf{X}_k |D) = \sigma ( \max f(x)_k)$, where $\sigma$ is the sigmoid and $f(x)_k$ is the output of task $k$ and choose $\tau_k \approx 0$  for each $k$. Then $\mathbf{P}(x \in \mathbf{X}_k | D)$ becomes approximately 1 for the task $k$ where the maximum logit value appears and 0 for the rest tasks. Therefore, Eq.~\ref{eq:cil_in_til_and_tp} in the paper
$$
\begin{aligned}
    \mathbf{P}(x \in \mathbf{X}_{k, j} | D) = \mathbf{P}(x \in \mathbf{X}_{k, j} | x \in \mathbf{X}_k, D) \mathbf{P}(x \in \mathbf{X}_k | D)
\end{aligned}
$$
is zero for all classes in tasks $k' \neq k$. Since only the probabilities of classes in task $k$ are non-zero, taking $\argmax$ over all class probabilities gives the same class as $\argmax$ over output logits.

We have also tried another definition of WP and TP. The considered WP is
\begin{align}
    \mathbf{P}(x \in \mathbf{X}_{k, j} | x \in \mathbf{X}_k, D) = \frac{e^{f(x)_{kj} / \nu_k }}{\sum_j e^{f(x)_{kj} / \nu_k }}, \label{eq:wip_max_softmax}
\end{align}
where $\nu_k$ is a temperature scaling parameter for task $k$, and the TP is
\begin{align}
    \mathbf{P}(x \in \mathbf{X}_k |D) = \frac{\mathbf{P}_k'(x \in \mathbf{X}_k | D) }{\sum_k \mathbf{P}_k' (x \in \mathbf{X}_k | D)}, \label{eq:tp_max_softmax}
\end{align}
where $\mathbf{P}_k'(x \in \mathbf{X}_k | D) =  \max_j e^{ f(x)_{kj} / \tau_k } / \sum_j e^{f(x)_{kj} / \tau_k }$
and $\tau_k$ is a temperature scaling parameter. 
This is the maximum softmax of task $k$.
We choose $\nu_k=0.1$ and $\tau_k=5$ for all $k$. A good $\tau$ and $\nu$ can be found using grid search on a validation set. However, one can also find the optimal values by optimization {using some past data saved for memory buffer.} The CIL results for the new prediction method is in Tab.~\ref{Tab:maintable_diff}.
\begin{table}[t]
\centering
\caption{
Average classification accuracy. The results are based on class prediction method defined with WP and TP in Eq.~\ref{eq:wip_max_softmax} and Eq.~\ref{eq:tp_max_softmax}, respectively. The results can improve by finding optimal temperature scaling parameters.
}
\begin{tabular}{l c c c c c c}
\toprule
\multirow{1}{*}{Method}  &  \multicolumn{1}{c}{M-5T} & \multicolumn{1}{c}{C10-5T}  &  \multicolumn{1}{c}{C100-10T} &  \multicolumn{1}{c}{C100-20T} &  \multicolumn{1}{c}{T-5T} & \multicolumn{1}{c}{T-10T} \\
\midrule
\textit{OWM} & 95.1\scalebox{1.0}{$\pm$0.11} & 40.6\scalebox{1.0}{$\pm$0.47} & 28.6\scalebox{1.0}{$\pm$0.82} & 22.9\scalebox{1.0}{$\pm$0.32} & 10.4\scalebox{1.0}{$\pm$0.54} & 9.2\scalebox{1.0}{$\pm$0.35} \Tstrut \\
\textit{MUC} & 75.7\scalebox{1.0}{$\pm$0.51} & 53.2\scalebox{1.0}{$\pm$1.32} & 30.6\scalebox{1.0}{$\pm$1.21} & 14.0\scalebox{1.0}{$\pm$0.12} & 33.1\scalebox{1.0}{$\pm$0.18} & 17.2\scalebox{1.0}{$\pm$0.13} \\
\textit{PASS}$^{\dagger}$ & 64.5\scalebox{1.0}{$\pm$2.64} & 33.6\scalebox{1.0}{$\pm$0.71} & 18.5\scalebox{1.0}{$\pm$1.85} & 20.8\scalebox{1.0}{$\pm$0.85} & 21.4\scalebox{1.0}{$\pm$0.44} & 13.0\scalebox{1.0}{$\pm$0.55} \\
LwF & 90.4\scalebox{1.0}{$\pm$1.18} & 63.0\scalebox{1.0}{$\pm$0.34} & 51.9\scalebox{1.0}{$\pm$0.88} & 47.5\scalebox{1.0}{$\pm$0.62} & 35.9\scalebox{1.0}{$\pm$0.32} & 27.8\scalebox{1.0}{$\pm$0.29} \\
iCaRL$^*$ & 87.4\scalebox{1.0}{$\pm$4.89} & 65.3\scalebox{1.0}{$\pm$0.83} & 52.9\scalebox{1.0}{$\pm$0.39} & 48.2\scalebox{1.0}{$\pm$0.70} & 34.8\scalebox{1.0}{$\pm$0.34} & 27.3\scalebox{1.0}{$\pm$0.17} \\
Mnemonics$^{\dagger *}$ & 91.8\scalebox{1.0}{$\pm$1.03} & 65.6\scalebox{1.0}{$\pm$1.55} & 50.7\scalebox{1.0}{$\pm$0.72} & 47.9\scalebox{1.0}{$\pm$0.71} & 36.3\scalebox{1.0}{$\pm$0.30} & 27.7\scalebox{1.0}{$\pm$0.78} \\
BiC & 95.1\scalebox{1.0}{$\pm$0.47} & 65.5\scalebox{1.0}{$\pm$0.81} & 50.8\scalebox{1.0}{$\pm$0.69} & 47.2\scalebox{1.0}{$\pm$0.71} & 37.0\scalebox{1.0}{$\pm$0.58} & 29.1\scalebox{1.0}{$\pm$0.34} \\
DER++ & 94.9\scalebox{1.0}{$\pm$0.50} & 63.1\scalebox{1.0}{$\pm$1.12} & 54.6\scalebox{1.0}{$\pm$1.21} & 48.9\scalebox{1.0}{$\pm$1.18} & 37.4\scalebox{1.0}{$\pm$0.72} & 32.1\scalebox{1.0}{$\pm$0.44} \\
\textit{HAT} & 82.1\scalebox{1.0}{$\pm$3.77} & 62.6\scalebox{1.0}{$\pm$1.31} & 41.5\scalebox{1.0}{$\pm$0.80} & 25.9\scalebox{1.0}{$\pm$0.56} & 38.9\scalebox{1.0}{$\pm$1.62} & 30.1\scalebox{1.0}{$\pm$0.52} \Tstrut \\
\textit{HyperNet} & 64.3\scalebox{1.0}{$\pm$2.98} & 56.7\scalebox{1.0}{$\pm$1.23} & 32.4\scalebox{1.0}{$\pm$1.07} & 24.5\scalebox{1.0}{$\pm$1.12} & 8.9\scalebox{1.0}{$\pm$0.58} & 7.0\scalebox{1.0}{$\pm$0.52} \\
\textit{Sup} & 69.7\scalebox{1.0}{$\pm$0.97} & 62.6\scalebox{1.0}{$\pm$1.11} & 46.8\scalebox{1.0}{$\pm$0.34} & 36.0\scalebox{1.0}{$\pm$0.32} & 41.5\scalebox{1.0}{$\pm$1.17} & 35.7\scalebox{1.0}{$\pm$0.40} \\
\hline
\textit{HAT+CSI} & 88.7\scalebox{1.0}{$\pm$1.27} & 85.2\scalebox{1.0}{$\pm$0.92} & 62.9\scalebox{1.0}{$\pm$1.07} & 53.6\scalebox{1.0}{$\pm$0.84} & 47.0\scalebox{1.0}{$\pm$0.38} & 46.2\scalebox{1.0}{$\pm$0.30} \\
\textit{Sup+CSI} & 64.9\scalebox{1.0}{$\pm$1.95} & 87.4\scalebox{1.0}{$\pm$0.40} & 66.6\scalebox{1.0}{$\pm$0.23} & 60.5\scalebox{1.0}{$\pm$0.89} & 47.7\scalebox{1.0}{$\pm$0.30} & 46.3\scalebox{1.0}{$\pm$0.30} \\
HAT+CSI+c & 93.4\scalebox{1.0}{$\pm$0.43} & 85.2\scalebox{1.0}{$\pm$0.94} & 63.6\scalebox{1.0}{$\pm$0.69} & 55.4\scalebox{1.0}{$\pm$0.79} & 51.4\scalebox{1.0}{$\pm$0.38} & 46.5\scalebox{1.0}{$\pm$0.26} \\
Sup+CSI+c & 62.2\scalebox{1.0}{$\pm$3.49} & 86.2\scalebox{1.0}{$\pm$0.79} & 67.0\scalebox{1.0}{$\pm$0.14} & 60.4\scalebox{1.0}{$\pm$1.04} & 48.2\scalebox{1.0}{$\pm$0.35} & 46.1\scalebox{1.0}{$\pm$0.32} \\ 
\bottomrule
\end{tabular}
\label{Tab:maintable_diff}
\end{table}

\begin{proof}
[Proof of Theorem \ref{thm:tp_ood_extension}.]
\label{prf:tp_ood_extension}
i) Assume $x \in \mathbf{X}_{k_0}$. 

For $k = k_0$, we have
$$
\begin{aligned}
    H_{OOD, k_0} (x) &= - \log \mathbf{P}'_{k_0} (x \in \mathbf{X}_{k_0} | D) \\ %^{1 / \tau_k} \\
    &= - \frac{1}{\tau_{k_0}} \log \mathbf{P}(x \in \mathbf{X}_{k_0} | D) \\
    &= \frac{1}{\tau_{k_0}} H_{TP} (x) \leq \frac{\delta}{\tau_{k_0}}.
\end{aligned}
$$
For $k \neq k_0$, we have
$$
\begin{aligned}
    H_{OOD, k} (x) &= - \log \mathbf{P}'_k (x \notin \mathbf{X}_k | D) \\
    &= - \log (1 - \mathbf{P}'_k (x \in \mathbf{X}_k | D))\\
    &= - \log (1 - \mathbf{P} (x \in \mathbf{X}_k | D)^{1 / \tau_k})\\
    &= - \log (1 - (1 - \mathbf{P}(x \in \cup_{k' \neq k} \mathbf{X}_{k'} | D))^{1 / \tau_k})\\ 
    &\leq - \log (1 - (1 - \mathbf{P}(x \in \mathbf{X}_{k_0} | D))^{1 / \tau_k})\\
    &= - \log (1 - (1 - e^{-H_{TP}(x)})^{1 / \tau_k})\\
    &\leq - \log (1 - (1 - e^{-\delta})^{1 / \tau_k}).\\
\end{aligned}
$$

ii) Assume $x \in \mathbf{X}_{k_0}$. 

For $k = k_0$, by $H_{OOD, {k_0}} (x) \leq \delta_{k_0}$, we have 
$$- \log \mathbf{P}'_{k_0} (x \in \mathbf{X}_{k_0} | D) \leq \delta_{k_0},$$ 
which means 
$$\mathbf{P}'_{k_0} (x \in \mathbf{X}_{k_0} | D) \geq e^{-\delta_{k_0}}.$$
For $k \neq k_0$, by $H_{OOD, {k}} (x) \leq \delta_{k}$, we have 
$$- \log \mathbf{P}'_{k} (x \notin \mathbf{X}_{k} | D) \leq \delta_{k},$$
which means 
$$\mathbf{P}'_{k} (x \in \mathbf{X}_{k} | D) \leq 1 - e^{-\delta_{k}}.$$

Therefore, we have
$$
\begin{aligned}
    \mathbf{P} (x \in \mathbf{X}_{k_0} | D) &= \frac{\mathbf{P}'_{k_0} (x \in \mathbf{X}_{k_0} |D)^{1/\tau_{k_0}}}{\sum_k \mathbf{P}'_k (x \in \mathbf{X}_k |D)^{1/\tau_{k}}} \\
    &\geq \frac{e^{-\delta_{k_0} / \tau_{k_0}}}{1 + \sum_{k\neq k_0} (1 - e^{-\delta_k})^{1 / \tau_k}} \\
    &= \frac{e^{-\delta_{k_0} / \tau_{k_0}}}{1 - (1 - e^{-\delta_{k_0}})^{1 / \tau_{k_0}} + \sum_{k} (1 - e^{-\delta_k})^{1 / \tau_k}} \\
    &= \frac{e^{-\delta_{k_0} / \tau_{k_0}}}{1 - (1 - e^{-\delta_{k_0}})^{1 / \tau_{k_0}}} \cdot
    \frac{1}{1 + \frac{\sum_{k} (1 - e^{-\delta_k})^{1 / \tau_k}}{1 - (1 - e^{-\delta_{k_0}})^{1 / \tau_{k_0}}}}.
\end{aligned}
$$

Hence, 
$$
\begin{aligned}
    H_{TP} (x) &= -\log \mathbf{P} (x \in \mathbf{X}_{k_0} | D) \\
    &\leq - \log \frac{e^{-\delta_{k_0} / \tau_{k_0}}}{1 - (1 - e^{-\delta_{k_0}})^{1 / \tau_{k_0}}} \cdot
    \frac{1}{1 + \frac{\sum_{k} (1 - e^{-\delta_k})^{1 / \tau_k}}{1 - (1 - e^{-\delta_{k_0}})^{1 / \tau_{k_0}}}} \\
    &= \frac{\delta_{k_0}}{\tau_{k_0}} + \log [1 - (1 - e^{-\delta_{k_0}})^{1 / \tau_{k_0}}] 
    + \log \left[1 + \frac{\sum_{k} (1 - e^{-\delta_k})^{1 / \tau_k}}{1 - (1 - e^{-\delta_{k_0}})^{1 / \tau_{k_0}}}\right]\\
    &\leq \frac{\delta_{k_0}}{\tau_{k_0}} + \frac{\sum_{k} (1 - e^{-\delta_k})^{1 / \tau_k}}{1 - (1 - e^{-\delta_{k_0}})^{1 / \tau_{k_0}}} \\
    &= \sum_k \frac{\mathbf{1}_{x \in \mathbf{X}_k} \delta_{k}}{\tau_{k}} + \frac{\sum_{k} (1 - e^{-\delta_k})^{1 / \tau_k}}{\sum_k \mathbf{1}_{x \in \mathbf{X}_k} (1 - (1 - e^{-\delta_{k}})^{1 / \tau_{k}})}.
\end{aligned}
$$
\end{proof}

\section{Details of HAT, Sup, and CSI} \label{apx:csi}
We have proposed two highly effective new CIL methods, HAT+CSI and Sup+CSI, by integrating the existing parameter isolation based continual learning (CL) method HAT~\cite{Serra2018overcoming} or Sup~\cite{supsup2020} with the strong OOD detection method CSI~\cite{tack2020csi}. We replaced the training loss of HAT and Sup by that of CSI while applying the continual learning techniques of the respective method. In this section, we overview Sup, HAT, and CSI, and explain how to train them continually. Figure~\ref{fig:diagrams} shows the overall training frameworks of Sup+CSI and HAT+CSI.

Denote feature extractor by $h$, classifier by $f$, and the parameters by $\mathbf{W}$. In the main paper, we denote the output of task $k$ by $f(x)_k$ for both a single-head or multi-head method (e.g., Eq.~\ref{eq:cil_pred}) for consistency. In this section, we use $f(x, k)$ to indicate the output of task $k$ to be more explicit as both HAT and Sup are multi-head methods (one head for each task) designed for task incremental learning (TIL).

\subsection{Sup}
\begin{figure*}
\centering
\subfigure[]{\includegraphics[width=68mm]{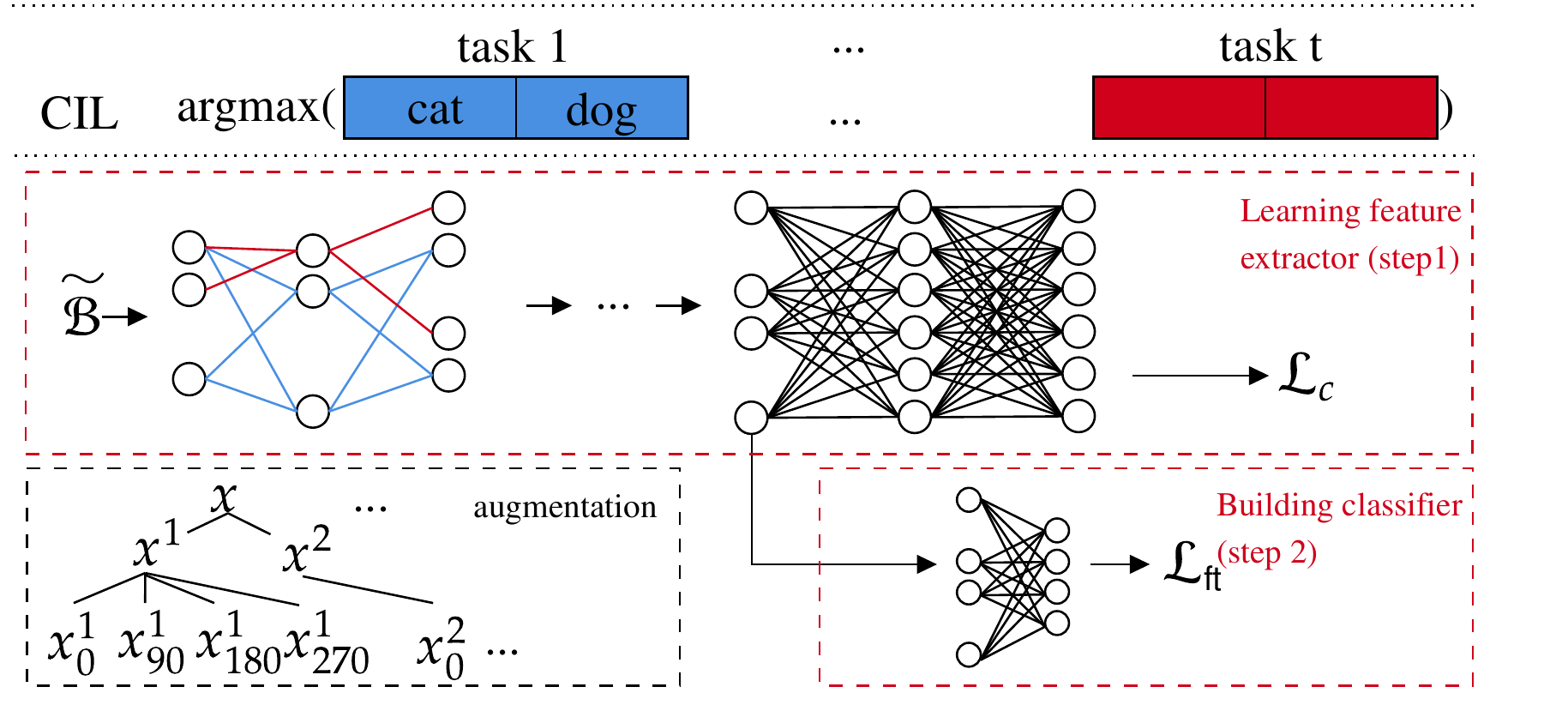}}
\subfigure[]{\includegraphics[width=68mm]{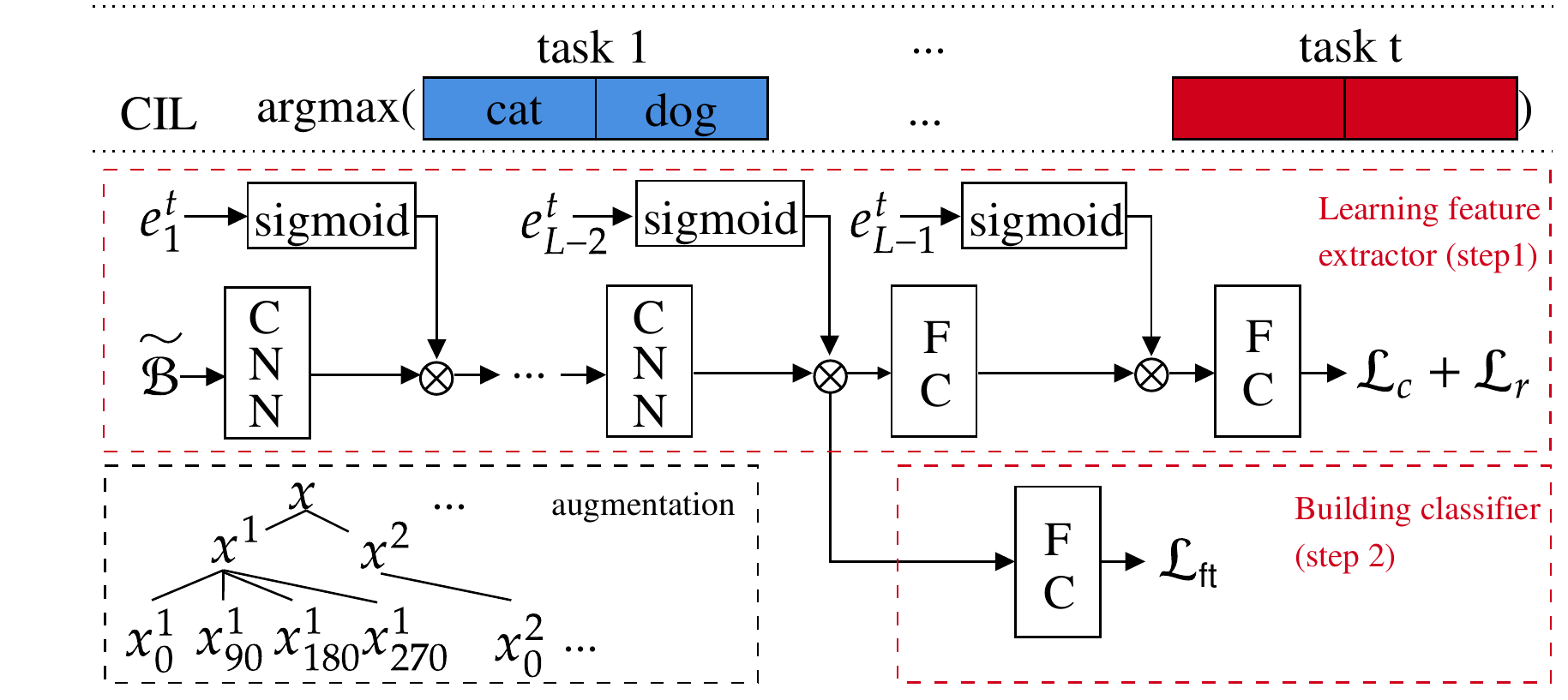}}
\caption{
Overview of prediction and training framework of Sup+CSI and HAT+CSI. (a) Sup+CSI: The CIL prediction is made by taking argmax over the concatenated output values from each task. In training the network, the training batch is augmented to give different views of samples for contrastive training in the OOD detection algorithm CSI. The training consists of two steps following CSI. The first step is learning the feature extractor. In this step, the Edge Popup algorithm~\cite{ramanujan2020s} is applied to find a sparse network for each task. The sparse networks, which are indicated by edges of different colors in the diagram. The second step fine-tunes the classifier only the fixed feature extractor. (b) HAT+CSI: The CIL prediction is also made by argmax over the concatenated output from each task as Sup+CSI method. Due to the OOD detection algorithm CSI, the overall training process is similar to Sup+CSI except that it applies the hard attention algorithm~\cite{Serra2018overcoming}.
In training feature extractor, task embeddings are applied to find hard masks at each layer. Then given the learned feature representations, fine-tunes the classifier in step 2.
}
\label{fig:diagrams}
\end{figure*}

SupSup (Sup)~\cite{supsup2020} trains supermasks by Edge Popup algorithm~\cite{ramanujan2020s}. More precisely, given initial $\mathbf{W}$, find binary masks $\mathbf{M}_k$ for task $k$ to minimize the cross-entropy loss
\begin{align}
    \mathcal{L} = - \frac{1}{|\mathbf{X}_{k}|} \sum \log p(y | x, k), \label{eq:loss_sup}
\end{align}
where $\mathbf{X}_k$ is the training data for task $k$, and
\begin{align}
    p(y | x, k) = f(h(x; \mathbf{W} \otimes \mathbf{M}_{k})),
\end{align}
where $\otimes$ indicates an element-wise product. The masks are obtained by selecting the top $p$\% of entries in the score matrices $\mathbf{V}$. The $p$ value determines the sparsity of the mask $\mathbf{M}_{k}$. The subnetwork found by Edge Popup algorithm is indicated by different colors in Figure~\ref{fig:diagrams}(a).

Given the task-id $k$ of a test instance at inference, the system (which is referred as Sup GG in the original Sup paper) uses the task-specific mask $\mathbf{M}_k$ to obtain the classification output. By integrating the OOD detection method, CSI, during training, Sup+CSI does not require to know the task-id of test instance, which makes Sup+CSI applicable to CIL (class incremental learning).

\subsection{HAT}
We now discuss the hard attention (mask) mechanism of HAT~\cite{Serra2018overcoming}. It finds binary masks ${a}_{l}^{k}$ for each layer $l$ and task $k$, and uses them to block/unblock information flow at forward and backward pass. More precisely, the hard attention is defined as
\begin{align}
    {a}_{l}^{k} = \sigma (s {e}_{l}^{k} ),
\end{align}
where $\sigma$ is the sigmoid, $s$ is a positive constant, and ${e}_{l}^{k}$ is a learnable embedding. To approximate the binary mask, the system uses a large $s$ value. The attention is applied to the output at each layer as
\begin{align}
    {h}_{l}^{'} = {a}_{l}^{k} \otimes {h}_{l},
\end{align}
where $\otimes$ is an element-wise product, and
\begin{align}
    {h}_{l} = \text{ReLU}({W}_{l} {h}_{l-1} + {b}_{l}).
\end{align}
The neurons with attention value $1$ is important for task $k$ while those with zero attention value are not necessary for the task, and thus they can be freely changed without affecting the output value ${h}_{l}^{'}$. 
The system needs to know which neurons are important to protect the previous knowledge from forgetting. Denote the accumulated attentions of all previous tasks by
\begin{align}
    {a}_{l}^{<k} = \max ( {a}_{l}^{< k-1}, {a}_{l}^{k-1} ),
\end{align}
where ${a}_{l}^{0}$ is the zero vector and $\max$ is an element-wise maximum. The gradients of parameters corresponding to important neurons is modified as
\begin{align}
    \nabla w_{ij, l}' = \left( 1 - \min \left( a_{i, l}^{< k}, a_{j, l-1}^{< k} \right) \right) \nabla w_{ij, l}, \label{hatupdate}
\end{align}
where $a_{i, l}^{< k}$ is the $i$'th unit of ${a}_{l}^{< k}$ and $l=1, \cdots, L-1$. The hard attention is not applied to the last layer $L$ since it is a task-specific classification layer.

To encourage sparsity in ${a}_{l}^{k}$, the system uses regularization as
\begin{align}
    \mathcal{L}_{r} = \lambda_{k} \frac{\sum_{l}\sum_{i} a_{i, l}^{k}\left( 1 - a_{i, l}^{< k} \right)}{\sum_{l}\sum_{i} \left( 1 - a_{i, l}^{< k} \right)},
\end{align}
where $\lambda_{k}$ is a hyper-parameter. The system minimizes the loss
\begin{align}
    \mathcal{L} = \mathcal{L}_{ce} + \mathcal{L}_{r}, \label{eq:loss_hat}
\end{align}
where $\mathcal{L}_{ce}$ is the cross-entropy loss. The overall framework of the algorithm is shown in Figure~\ref{fig:diagrams}(b).

\subsection{CSI}
We now explain the OOD detection method CSI, and how to incorporate it in HAT and Sup. CSI is based on contrastive learning~\cite{chen2020simple, he2020momentum} and data augmentation due to their excellent performance~\cite{tack2020csi}. 
Since this section focuses on how to learn a single task based on OOD detection, we omit the task-id unless necessary. The OOD training process is similar to that of contrastive learning. It consists of two steps: 1) learning {the feature representation by} the composite $g \circ h$, where $h$ is a feature extractor and $g$ is a projection to contrastive representation, and 2) learning a linear classifier $f$ mapping the feature representation of $h$ to the label space. This two step training process is outlined in Figure~\ref{fig:diagrams}(a) and (b). In the following, we describe the training process: contrastive learning for feature representation learning (1), and OOD classifier building (2). We then explain how to make a prediction based on an ensemble method to further improve prediction.

\subsubsection{Contrastive Loss for Feature Learning.}
This is step 1. Supervised contrastive learning is used to try to repel data of different classes and align data of the same class more closely to make it easier to classify them. A key operation is data augmentation via transformations. 

{Given a batch of $N$ samples, each sample ${x}$ is first duplicated and each version then goes through \textit{three initial augmentations}
(horizontal flip, color changes, and Inception crop~\cite{inception}) to generate two different views ${x}^{1}$ and ${x}^{2}$ (they keep the same class label as ${x}$).}
Denote the augmented batch by $\mathcal{B}$, which now has $2N$ samples. {In~\cite{hendrycks2019using,tack2020csi}}, it was shown that using image rotations is effective in learning OOD detection models because such rotations can effectively serve as out-of-distribution (OOD) training data.  
For each augmented sample ${x} \in \mathcal{B}$ with class $y$ of a task, we rotate ${x}$ by $90^{\circ}, 180^{\circ}, 270^{\circ}$ to create three images, which are assigned \textit{three new classes} $y_1, y_2$, and $y_3$, respectively.
{This results in a larger augmented batch $\tilde{\mathcal{B}}$. Since we generate three new images from each ${x}$, %\in \mathcal{B}$, 
the size of $\tilde{\mathcal{B}}$ is $8N$. For each original class, we now have 4 classes. For a sample ${x} \in \tilde{\mathcal{B}}$, let $\mathcal{\tilde{B}}({x}) = \mathcal{\tilde{B}} \backslash \{ {x} \}$
and 
let $P({x}) \subset \tilde{\mathcal{B}} \backslash \{ {x} \}$ 
be a set consisting of the data of the same class as ${x}$ distinct from ${x}$.
The contrastive representation of a sample ${x}$ is ${z}_{x} = g(h({x}, t)) / \| g(h({x}, t)) \|$, where $t$ is the current task. 
In learning, we minimize the supervised contrastive loss~\cite{khosla2020supervised} of task $t$. 
\begin{align}
    \mathcal{L}_{c}
    % {\Phi}
    % , {a}^{t}
    &= \frac{1}{8N} \sum_{ {x} \in \tilde{\mathcal{B}}} \frac{-1}{| P({x}) |} \sum_{{p} \in P({x})} \log{ \frac{ \text{exp}( {z}_{{x}} \cdot {z}_{{p}} / \tau)}{\sum_{{x}'  \in \tilde{\mathcal{B}}({x}) } \text{exp}( {z}_{{x}} \cdot {z}_{{x}'} / \tau) } }, \label{modsupclr}
\end{align}
where 
$\tau$ is a scalar temperature, $\cdot$ is dot product, and $\times$ is  multiplication. 
The loss is reduced by repelling ${z}$ of different classes and aligning ${z}$ of the same class more closely.
$\mathcal{L}_{c}$ basically trains a feature extractor with good 
representations for learning an  OOD classifier.} 

Since the feature extractor is shared across tasks in continual learning, a protection is needed to prevent catastrophic forgetting. HAT and Sup use their respective technique to protect their feature extractor from forgetting. Therefore, the losses $\mathcal{L}$ of Eq.~\ref{eq:loss_sup} and $\mathcal{L}_{ce}$ of Eq.~\ref{eq:loss_hat} are replaced by Eq.~\ref{modsupclr} while the forgetting prevention mechanisms still hold.

\subsubsection{Learning the Classifier.} 
This is step 2. {Given the feature extractor $h$ trained with the loss in Eq.~\ref{modsupclr}, we {\textit{freeze $h$} and} only \textit{fine-tune} the linear classifier $f$, which is trained to predict the classes of task $t$ \textit{and} the augmented rotation classes.} $f$ maps the feature representation to {the label space in} $\mathcal{R}^{4|\mathcal{C}^{t}|}$, where $4$ is the number of rotation classes including the original data with $0^{\circ}$ rotation and $|\mathcal{C}^{t}|$ is the number of {original} classes in task $t$. We minimize the cross-entropy loss,
\begin{align}
    \mathcal{L}_{\text{ft}} = - \frac{1}{|\tilde{\mathcal{B}} |} \sum_{({x}, y) \in \tilde{\mathcal{B}}} 
    \log \tilde{p}(y | {x}, t), 
    \label{3obj}
\end{align}
where $\text{ft}$ indicates fine-tune,
and 
\begin{align}
    \tilde{p}(y | {x}, t) = \text{softmax} \left( f(h({x}, t))
    \right) \label{probrotation}
\end{align}
where
$f(h({x}, t)) \in \mathcal{
R}^{4|\mathcal{C}^{t}|}$. The output $f(h({x}, t))$
includes the rotation classes. The linear classifier is trained to predict the original \textit{and} the rotation classes. Since individual classifier is trained for each task and the feature extractor is frozen, no protection is necessary.

\subsubsection{Ensemble Class Prediction.} \label{apx:ensemblesection}
We describe how to predict a label $y \in \mathcal{C}^{t}$ (TIL) and $y \in \mathcal{C}$ (CIL) ($\mathcal{C}$ is the set of original classes of all tasks).
We assume all tasks have been learned and their models are protected by masks.
% , which we discuss in the next subsection. 

We discuss the prediction of class label $y$ for a test sample ${x}$ in the TIL setting first. Note that the network $f\circ h$ in Eq.~\ref{probrotation} returns logits for rotation classes (including the original task classes). Note also for each original class label $j_k \in \mathcal{C}^{k}$ (original classes) of a task $k$, we created three additional rotation classes. For class $j_k$, the classifier $f$ will produce four output values from its four rotation class logits, i.e., $f_{j_k,0}(h({x_0}, k))$, $f_{j_k,90}(h({x_{90}}, k))$, $f_{j_k,180}(h({x_{180}}, k))$, and $f_{j_k,270}(h({x_{270}}, k))$, where 0, 90, 180, and 270 represent $0^{\circ}, 90^{\circ}, 180^{\circ}$, and $270^{\circ}$ rotations respectively and ${x}_0$ is the original ${x}$. 
We compute an ensemble output $f_{j_k}(h({x},k))$ for each class $j_k \in \mathcal{C}^{k}$ of task $k$, 
\begin{align}
    f(h({x},k))_{j_k} = \frac{1}{4} \sum_{\text{deg}} f (h({x}_{\text{deg}}, k))_{j_k,\text{deg}} \label{eq:ensemblelogit}.
\end{align}
We use Eq.~\ref{eq:cil_pred} to make the CIL class prediction, where the final class prediction is made as 
\begin{align}
    \hat{y} = \argmax \bigoplus_{i} f(h(x, i)). \label{eq:afterensemble}
\end{align}

\section{Output Calibration} \label{apx:calibration}
In this section, we discuss the output calibration technique used in Sec.~\ref{sec.HAT+CSI} to improve the final prediction accuracy. Even if an OOD detection of each task was perfect (i.e. the model accept and reject IND and OOD samples perfectly), the system could make incorrect class prediction if the magnitudes of outputs across different tasks are different. To ensure that the output values are comparable, we calibrate the outputs by scaling $\alpha_k$ and shifting $\beta_k$ for each task. The optimal parameters $(\alpha_k, \beta_k) \in R \times R$ can be found by solving optimization problem using samples in memory buffer. More precisely, denote the memory buffer $\mathcal{M}$ and calibration parameters $( \alpha, \beta ) \in R^{T} \times R^{T}$, where $T$ is the number of learned tasks. After training $T$th task, we find optimal calibration parameters by minimizing the cross-entropy loss,
\begin{align}
    \mathcal{L} = - \frac{1}{|\mathcal{M}|} \sum_{(x, y) \in \mathcal{M}} \log p(y | x)
\end{align}
where $p(c | x)$ is computed using the softmax,
\begin{align}
    \text{softmax} \bigoplus [ \alpha_k f(x)_k + \beta_k ]
\end{align}
where $\bigoplus$ indicates the concatenation and $f(x)_k$ is the output of task $k$ as Eq.~\ref{eq:cil_pred}. Given the optimal parameters $(\alpha^*, \beta^*)$, we make final prediction as
\begin{align}
    \hat{y} = \argmax \bigoplus [ \alpha_k^* f(x)_k + \beta_k^* ]
\end{align}
If we use $OOD_k = \sigma ( \alpha_k^* f(x)_k + \beta_k^* )$, where $\sigma$ is the sigmoid, and $TP_k = OOD_k / \sum_{k'} OOD_{k'}$, the theoretical results in Sec.~\ref{sec.theorem} hold.

\section{TIL (WP) Results} \label{apx:til_results}
{\color{black}
The TIL (WP) results of all the systems are reported in Tab.~\ref{Tab:til_full}. HAT and Sup show strong performances compared to the other baselines as they leverage task-specific parameters. However, as shown in Theorem~\ref{thm:ce}, the CIL depends on TP (or OOD). Without an OOD detection mechanism in HAT or Sup, they perform poorly in CIL as shown in the main paper. The contrastive learning in CSI also improves the IND prediction (i.e., WP), and this along with OOD detection results in the strong CIL performance.
}
\begin{table}[t]
\centering
\caption{
{The TIL results of all the systems. The calibrated versions (+c) of our methods are omitted as calibration does not affect TIL performance. Exemplar-free methods are italicized.}
}
% \resizebox{1.0\columnwidth}{!}{
\begin{tabular}{l c c c c c c}
\toprule
\multirow{1}{*}{Method}  &  \multicolumn{1}{c}{M-5T} & \multicolumn{1}{c}{C10-5T}  &  \multicolumn{1}{c}{C100-10T} &  \multicolumn{1}{c}{C100-20T} &  \multicolumn{1}{c}{T-5T} & \multicolumn{1}{c}{T-10T} \\
\midrule
\textit{OWM} & 99.7\scalebox{0.9}{$\pm$0.03} & 85.0\scalebox{0.9}{$\pm$0.07} & 59.6\scalebox{0.9}{$\pm$0.83} & 65.4\scalebox{0.9}{$\pm$0.48} & 22.4\scalebox{0.9}{$\pm$0.87} & 28.1\scalebox{0.9}{$\pm$0.55} \\
\textit{MUC} & 99.9\scalebox{0.9}{$\pm$0.02} & 95.1\scalebox{0.9}{$\pm$0.10} & 77.3\scalebox{0.9}{$\pm$0.83} & 73.4\scalebox{0.9}{$\pm$9.16} & 55.9\scalebox{0.9}{$\pm$0.26} & 47.2\scalebox{0.9}{$\pm$0.22} \\
\textit{PASS}$^{\dagger}$ & 99.5\scalebox{0.9}{$\pm$0.14} & 83.8\scalebox{0.9}{$\pm$0.68} & 72.1\scalebox{0.9}{$\pm$0.70} & 76.8\scalebox{0.9}{$\pm$0.32} & 49.9\scalebox{0.9}{$\pm$0.56} & 46.5\scalebox{0.9}{$\pm$0.39} \\
LwF & 99.9\scalebox{0.9}{$\pm$0.09} & 95.2\scalebox{0.9}{$\pm$0.30} & 86.2\scalebox{0.9}{$\pm$1.00} & 89.0\scalebox{0.9}{$\pm$0.45} & 56.4\scalebox{0.9}{$\pm$0.48} & 55.3\scalebox{0.9}{$\pm$0.35} \\
iCaRL & 99.9\scalebox{0.9}{$\pm$0.08} & 94.9\scalebox{0.9}{$\pm$0.34} & 84.2\scalebox{0.9}{$\pm$1.04} & 85.7\scalebox{0.9}{$\pm$0.68} & 54.5\scalebox{0.9}{$\pm$0.29} & 52.7\scalebox{0.9}{$\pm$0.37} \\
Mnemonics$^{\dagger *}$ & 99.9\scalebox{0.9}{$\pm$0.03} & 94.5\scalebox{0.9}{$\pm$0.46} & 82.3\scalebox{0.9}{$\pm$0.30} & 86.2\scalebox{0.9}{$\pm$0.46} & 54.8\scalebox{0.9}{$\pm$0.16} & 52.9\scalebox{0.9}{$\pm$0.66} \\
BiC & 99.9\scalebox{0.9}{$\pm$0.03} & 95.4\scalebox{0.9}{$\pm$0.35} & 84.6\scalebox{0.9}{$\pm$0.48} & 88.7\scalebox{0.9}{$\pm$0.19} & 61.5\scalebox{0.9}{$\pm$0.60} & 62.2\scalebox{0.9}{$\pm$0.45} \\
DER++ & 99.7\scalebox{0.9}{$\pm$0.08} & 92.0\scalebox{0.9}{$\pm$0.54} & 84.0\scalebox{0.9}{$\pm$9.43} & 86.6\scalebox{0.9}{$\pm$9.44} & 57.4\scalebox{0.9}{$\pm$1.31} & 60.0\scalebox{0.9}{$\pm$0.74} \\
\textit{HAT} & 99.9\scalebox{0.9}{$\pm$0.02} & 96.7\scalebox{0.9}{$\pm$0.18} & 84.0\scalebox{0.9}{$\pm$0.23} & 85.0\scalebox{0.9}{$\pm$0.98} & 61.2\scalebox{0.9}{$\pm$0.72} & 63.8\scalebox{0.9}{$\pm$0.41} \\
\textit{HyperNet} & 99.7\scalebox{0.9}{$\pm$0.04} & 94.6\scalebox{0.9}{$\pm$0.37} & 76.8\scalebox{0.9}{$\pm$1.22} & 83.5\scalebox{0.9}{$\pm$0.98} & 23.9\scalebox{0.9}{$\pm$0.60} & 28.0\scalebox{0.9}{$\pm$0.69} \\
\textit{Sup} & 99.6\scalebox{0.9}{$\pm$0.01} & 96.6\scalebox{0.9}{$\pm$0.21} & 87.9\scalebox{0.9}{$\pm$0.27} & 91.6\scalebox{0.9}{$\pm$0.15} & 64.3\scalebox{0.9}{$\pm$0.24} & 68.4\scalebox{0.9}{$\pm$0.22} \\
\hline
\textit{HAT+CSI} & 99.9\scalebox{0.9}{$\pm$0.00} & 98.7\scalebox{0.9}{$\pm$0.06} & 92.0\scalebox{0.9}{$\pm$0.37} & 94.3\scalebox{0.9}{$\pm$0.06} & 68.4\scalebox{0.9}{$\pm$0.16} & 72.4\scalebox{0.9}{$\pm$0.21} \\
\textit{Sup+CSI} & 99.0\scalebox{0.9}{$\pm$0.08} & 98.7\scalebox{0.9}{$\pm$0.07} & 93.0\scalebox{0.9}{$\pm$0.13} & 95.3\scalebox{0.9}{$\pm$0.20} & 65.9\scalebox{0.9}{$\pm$0.25} & 74.1\scalebox{0.9}{$\pm$0.28} \\
\bottomrule
\end{tabular}
\label{Tab:til_full}
\end{table}

\section{Hyper-parameters}\label{apx:hyper_params}
Here we report the hyper-parameters that we did not report in the main paper due to space limitations. We mainly report the hyper-parameters of the proposed methods, HAT+CSI, Sup+CSI, and their calibrated versions. For all the experiments of the proposed methods, {we use the values chosen by the original CSI~\cite{tack2020csi}.}
We use LARS~\cite{you2017large} optimization with learning rate 0.1 for training the feature extractor. We linearly increase the learning rate by 0.1 per epoch for the first 10 epochs. After that, we use cosine scheduler~\cite{loshchilov2016sgdr} without restart as in \cite{tack2020csi,chen2020simple}. After training the feature extractor, we train the linear classifier for 100 epochs with SGD with learning rate 0.1 and reduce the rate by 0.1 at 60, 75, and 90 epochs. For all the experiments except MNIST, we train the feature extractor for 700 epochs with batch size 128. 

For the following hyper-parameters, we use 10\% of training data for validation to find a good set of values. For the number of epochs and batch size for MNIST, Sup+CSI trains for 1000 epochs with batch size of 32 while HAT+CSI trains for 700 epochs with batch size of 256.
The hard attention regularization penalty $\lambda_i$ in HAT is different by experiments and task $i$. For MNIST, we use $\lambda_1 = 0.25$, and $\lambda_2 = \cdots = \lambda_5 = 0.1$. For C10-5T, we use $\lambda_1=1.0$, and $\lambda_2 = \cdots = \lambda_5 = 0.75$. For C100-10T, $\lambda_1=1.5$, and $\lambda_2 = \cdots = \lambda_{10} = 1.0$ are used. For C100-20T, $\lambda_1 = 3.5$, and $\lambda_2 = \cdots = \lambda_{20} = 2.5$ are used. For T-5T, $\lambda_i = 0.75$ for all tasks, and lastly, for T-10T, $\lambda_1 = 1.0$, and $\lambda_2 = \cdots = \lambda_{10} = 0.75$ are used. We use larger $\lambda_1$ for the first task than the later tasks as we have found that the larger regularization on the first task results in better accuracy. This is by the definition of regularization in HAT. The earlier task gives lower penalty than later tasks. We manually give larger penalty to the first task. We did not search hyper-parameter $\lambda_t$ for tasks $t \geq 2$. For sparsity in Sup+CSI, we simply choose the least sparsity value of 32 used in the original Sup paper without parameter search.

Calibration methods (HAT+CSI+c and Sup+CSI+c) are based on its memory free versions (i.e. HAT+CSI and Sup+CSI). Therefore, the model training part uses the same hyper-parameters as their calibration free counterparts. 
{For calibration training, 
we use SGD with learning rate 0.01, 160 training iterations, and batch size of 15 for HAT+CSI+c for all experiments. For Sup+CSI+c, we use the same values for all the experiments except for MNIST.
For MNIST,
we use learning rate 0.05, batch size of 8, and run 280 iterations.
}

For the baselines, we use the hyper-parameters reported in the original papers or in their code. If the hyper-parameters are unknown or the code does not reproduce the result (e.g., the baseline did not implement a particular dataset or the code had undergone significant version change), we search for the hyper-parameters as we did for HAT+CSI and Sup+CSI.

\section{Computes and Resources Used in Experiments} \label{apx:n_params}
\begin{table}
\centering
\caption{
The number of parameters used at inference after learning the final task. The M after each value indicates millions.
}
\begin{tabular}{ c c c c c c c c } 
\toprule
Method & M-5T & C10-5T & C100-10T & C100-20T & T-5T & T-10T \\ 
\midrule
OWM & 5.27M & 5.27M & 5.36M & 5.36M & 5.46M & 5.46M \\
MUC & 1.06M & 11.19M & 45.06M & 45.06M & 45.47M & 45.47M \\
PASS & 1.03M & 11.17M & 44.76M & 44.76M & 44.86M & 44.86M \\
LwF & 1.03M & 11.17M & 44.76M & 44.76M & 44.86M & 44.86M \\
iCaRL & 1.03M & 11.17M & 44.76M & 44.76M & 44.86M & 44.86M \\
Mnemonics & 1.03M & 11.17M & 44.76M & 44.76M & 44.86M & 44.86M \\
BiC & 1.03M & 11.17M & 44.76M & 44.76M & 44.86M & 44.86M \\
DER++ & 1.03M & 11.17M & 44.76M & 44.76M & 44.86M & 44.86M \\
HAT & 1.04M & 11.23M & 45.01M & 45.28M & 44.97M & 45.11M \\
HyperNet & 0.48M & 0.47M & 0.47M & 0.47M & 0.48M & 0.48M \\
Sup & 0.05M & 1.43M & 5.75M & 11.45M & 2.95M & 5.80M \\
\hline
HAT+CSI & 1.07M & 11.25M & 45.31M & 45.58M & 45.59M & 45.72M \\ 
HAT+CSI+c & 1.07M & 11.25M & 45.31M & 45.58M & 45.59M & 45.72M \\
Sup+CSI & 0.28M & 1.38M & 5.90M & 11.60M & 3.04M & 6.05M \\ 
Sup+CSI+c & 0.28M & 1.38M & 5.90M & 11.60M & 3.04M & 6.05M \\
\bottomrule
\end{tabular}
\end{table}
This paper provides a guidance on how to solve the CIL problem, backed by theoretical justifications. Based on the guidance, we have proposed some new CIL methods. Two outstanding ones are HAT+CSI and Sup+CSI. These methods achieve state-of-the-art CIL performances, but by no mean, they are the only approaches. Many CIL algorithms can be designed following the analysis as it is general to any CL model.

Despite the generality of our work, we report the execution time and required memory for HAT+CSI and Sup+CSI. The report is based on a machine with NVIDIA RTX 3090 on C10-5T experiments. HAT+CSI takes 28.68 hours while Sup+CSI runs for 18.41 hours, {which are slower than baselines.}
Contrastive learning and extensive data augmentation in CSI are the major reason for the slow execution time. However, if other more efficient OOD detection algorithms can replace CSI, the running time can be improved with the new OOD detection methods. 

As noted in Sec.~\ref{sec.training}, all the methods use the same backbone architecture with the same width and depth except for OWM and HyperNet for the reasons explained in the main paper. We report the number of parameters of each method required for inference after learning the last task. Sup and Sup+CSI uses a  very small number of parameters because Sup finds a sparse subnetwork for each task. Our methods HAT+CSI introduces 7.7K, 17.6K, 68.0K, 47.5K, 191.0K, and 109.0K parameters on M-5T, C10-5T, C100-10T, C100-20T, T-5T, and T-10T, respectively, at each task. Sup+CSI introduces 56.3K, 284.9K, 590.3K, 580.0K, 607.7K, and 605.1K parameters on the same experiments. The calibrated methods HAT+CSI+c and Sup+CSI+c introduce 2 parameters ($\alpha_k, \beta_k$) per task.

For HAT and HAT+CSI, the reported number of parameters is based on the network at full capacity. The hard attention masks consume 71.10, 86.31, 98.89, 99.71, 92.94, and 98.67\% of the total network capacity on average over 5 runs for HAT on M-5T, C10-5T, C100-10T, C100-20T, T-5, and T-10T, respectively. Similarly, 99.39, 99.56, 99.56, 50.68, 94.94, and 99.18\% of the total network capacity are used for HAT+CSI on the same datasets on average.

\section{Negative Societal Impacts}\label{apx:societalimpact}
The goal of continual learning is to learn a sequence of tasks incrementally. Like many machine learning algorithms, our proposed methods could be affected by bias in the input data as this work does not deal with fairness or bias in the data. A possible solution to mitigate the problem is to check bias in data before training.

\section{Forgetting Rate}\label{apx:forgetting}
We discuss forgetting rate (i.e., backward transfer)~\cite{Lopez2017gradient}, which is defined for task $t$ as
\begin{align}
    \mathcal{F}^{t} = \frac{1}{t-1} \sum_{k=1}^{t-1} \mathcal{A}_{k}^{\text{init}} - \mathcal{A}_{k}^{t},
\end{align}
where $\mathcal{A}_{k}^{\text{init}}$ is the classification accuracy of task $k$'s data after learning it for the first time and $\mathcal{A}_{k}^{t}$ is the accuracy of task $k$'s data after learning task $t$. We report the forgetting rate after learning the last task.

\begin{figure}[h!]
    \centering
    \includegraphics[width=4in]{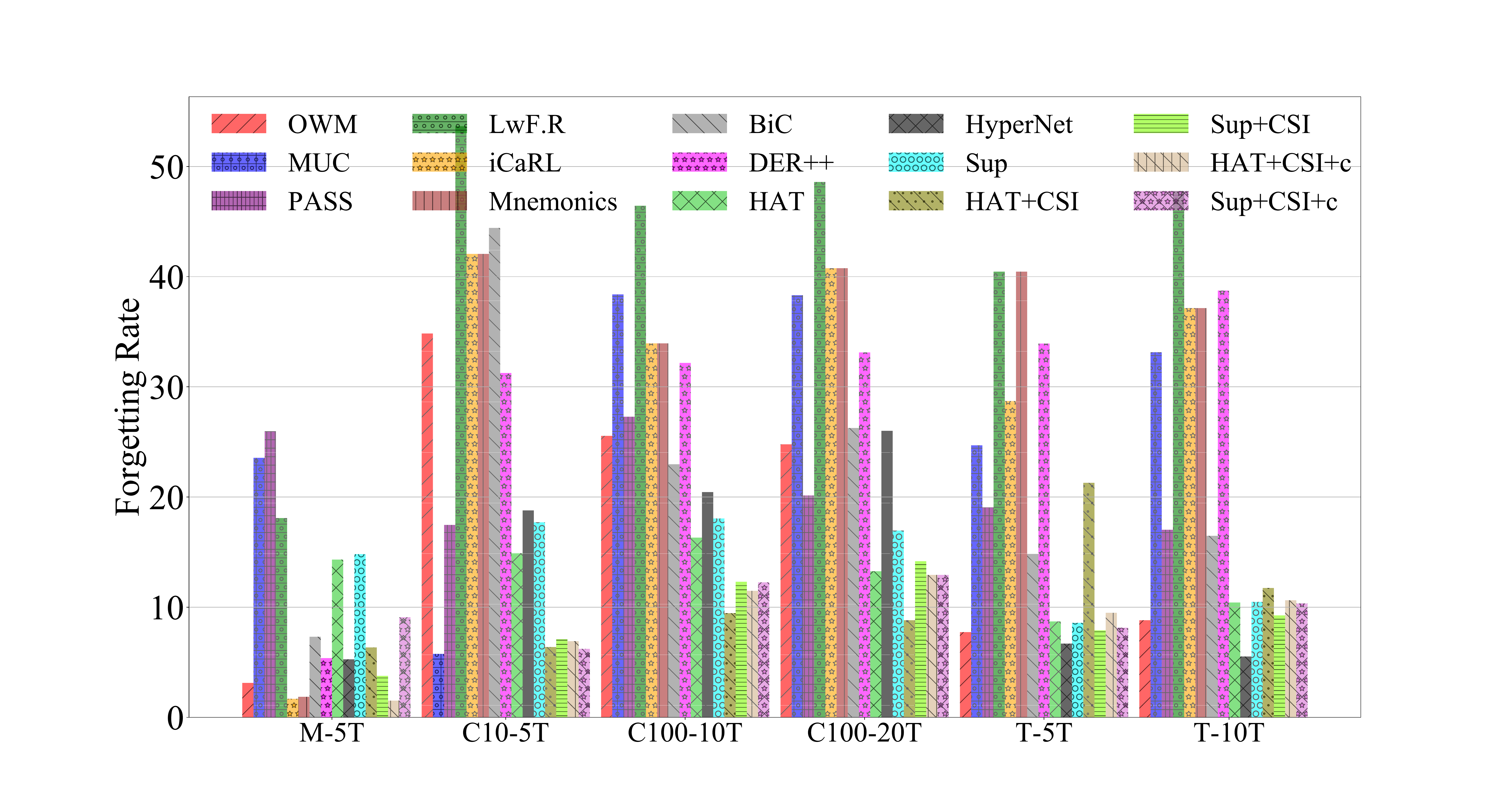}
    \caption{
    Average forgetting rate (\%). The lower the value, the better the method is on forgetting.
    }
    \label{fig:forget}
\end{figure}

Figure~\ref{fig:forget} shows the forgetting rates of each method. Some methods (e.g., OWM, iCaRL) experience less forgetting than the proposed methods HAT+CSI and Sup+CSI on M-5T. On this dataset, all the systems performed well. For instance, OWM and iCaRL achieve 95.8\% and 96.0\% accuracy while HAT+CSI and HAT+CSI+c achieve 94.4 and 96.9\% accuracy. As we have noted in the main paper, Sup+CSI and Sup+CSI+c achieve only 80.7 and 81.0 on M-5T although they have improved drastically from 70.1\% of the base method Sup.

OWM and HyperNet show lower forgetting rates than HAT+CSI+c and Sup+CSI+c on T-5T and T-10T. However, they are not able to adapt to new classes as OWM and HyperNet achieve the classification accuracy of only 10.0\% and 7.9\%, respectively, on T-5T and 8.6\% and 5.3\% on T-10T. HAT+CSI+c and Sup+CSI+c achieves 51.7\% and 49.2\%, respectively, on T-5T and 47.6\% and 46.2\% on T-10T.

In fact, the performance reduction (i.e., forgetting) in our proposed methods occurs not because the systems forget the previous task knowledge, but because the systems learn more classes and the classification naturally becomes harder. The continual learning mechanisms (HAT and Sup) used in the proposed methods experience little or no forgetting because they find independent subset of parameters for each task, and the learned parameters are not interfered during training. {\color{black}For the forgetting rate results in the TIL setting, refer to our earlier workshop paper \cite{kim2022continual}.}

\end{document}